\documentclass[lettersize,journal]{IEEEtran}
\usepackage{amsmath,amsfonts}
\usepackage{algorithm}
\usepackage{array}
\usepackage[caption=false,font=normalsize,labelfont=sf,textfont=sf]{subfig}
\usepackage{textcomp}
\usepackage{stfloats}
\usepackage{url}
\usepackage{verbatim}
\usepackage{graphicx}
\usepackage{cite}
\usepackage{amsthm}
\hyphenation{op-tical net-works semi-conduc-tor IEEE-Xplore}

\usepackage{mathtools}
\usepackage{amsmath}
\usepackage{xcolor}
\usepackage{multirow}
\usepackage{amssymb}
\DeclareMathOperator{\Var}{Var}
\newtheorem{theorem}{Theorem}
\newtheorem{lemma}{Lemma}
\theoremstyle{remark}
\newtheorem{remark}[theorem]{Remark}

\newcommand{\argmax}[1] {{\operatorname{argmax}}_{#1}} 

\definecolor{myblue}{rgb}{0.87,0.92,0.96}
\usepackage{booktabs}
\usepackage[figuresleft]{rotating}

\usepackage{amsmath}
\usepackage{algpseudocode}

\begin{document}

\title{Hundreds Guide Millions: Adaptive Offline Reinforcement Learning with Expert Guidance}

\author{Qisen Yang*, Shenzhi Wang*, Qihang Zhang*, Gao Huang,~\IEEEmembership{Member,~IEEE}, \\and Shiji Song,~\IEEEmembership{Senior Member,~IEEE}
        % <-this % stops a space
\thanks{Qisen Yang, Shenzhi Wang, Qihang Zhang, Gao Huang, and Shiji Song are with the Department of Automation, Tsinghua University, Beijing 100084, China. Email: \{yangqs19, wsz21, qh-zhang19\}@mails.tsinghua.edu.cn, \{gaohuang, shijis\}@tsinghua.edu.cn. \textit{(Corresponding author: Shiji Song.)}}% <-this % stops a space
\thanks{* Equal contribution.}}

% The paper headers
\markboth{IEEE Transactions on Neural Networks and Learning Systems}%
{Shell \MakeLowercase{\textit{et al.}}: A Sample Article Using IEEEtran.cls for IEEE Journals}

% \IEEEpubid{0000--0000/00\$00.00~\copyright~2023  IEEE
% }
\IEEEpubid{\begin{minipage}{\textwidth}\ \\[20pt]
\fbox{
\parbox{0.975\textwidth}{
\copyright 2023 IEEE. Personal use of this material is permitted. Permission from IEEE must be obtained for all other uses, in any current or future media, including reprinting/republishing this material for advertising or promotional purposes, creating new collective works, for resale or redistribution to servers or lists, or reuse of any copyrighted component of this work in other works. 
}}\end{minipage}}
% Remember, if you use this you must call \IEEEpubidadjcol in the second
% column for its text to clear the IEEEpubid mark.

\maketitle

\begin{abstract}
    Offline reinforcement learning (RL) optimizes the policy on a previously collected dataset without any interactions with the environment, yet usually suffers from the distributional shift problem.
To mitigate this issue, a typical solution is to impose a policy constraint on a policy improvement objective. 
However, existing methods generally adopt a ``one-size-fits-all'' practice, \textit{i.e.}, keeping only a single improvement-constraint balance for all the samples in a mini-batch or even the entire offline dataset.
In this work, we argue that different samples should be treated with different policy constraint intensities.
Based on this idea, a novel plug-in approach named Guided Offline RL (GORL) is proposed.
GORL employs a guiding network, along with only \textit{a few} expert demonstrations, to adaptively determine the relative importance of the policy improvement and policy constraint for every sample.~We theoretically prove that the guidance provided by our method is rational and near-optimal.
Extensive experiments on various environments suggest that GORL can be easily installed on most offline RL algorithms with statistically significant performance improvements.
\end{abstract}

\begin{IEEEkeywords}
Deep reinforcement learning, distributional shift, expert demonstration, offline reinforcement learning.
\end{IEEEkeywords}

\section{Introduction}
Offline reinforcement learning (RL)~\cite{offline_survey_sergey, offline_survey_prudencio,batch_rl} trains a policy without interactions with the environment.
This characteristic of offline RL brings convenience to applications in many fields where online interactions are expensive or dangerous~\cite{offline_survey_sergey}, such as robotics~\cite{qt-opt, offline-robotics-1, offline-robotics-2, offline-robotics-3}, autonomous driving~\cite{offline-autonomous-driving-1, offline-autonomous-driving-2}, and health care~\cite{offline-health-care-1, offline-health-care-2, offline-health-care-3}.
Nevertheless, offline RL usually suffers from the \textit{distributional shift}~\cite{offline_survey_sergey} problem, due to the gap between state-action distributions of the training dataset and test environment.
Specifically, after optimized on the offline dataset, the agent might encounter unvisited states or misestimate  state-action values during the test in the online environment, leading to a performance collapse.

A prevailing solution~\cite{td3+bc, bear, brac, cql, iql} to the distributional shift problem is reconciling two conflicting objectives: 
(1) \textit{policy improvement}, which is aimed to optimize the policy according to current value functions;
(2) \textit{policy constraint}, which keeps the policy's behavior around the offline dataset to avoid the agent being too aggressive.
Building on this idea, prior methods either add an explicit policy constraint term to the policy improvement equation~\cite{bear, brac, td3+bc}, 
or confine the policy implicitly by revising update rules of value functions~\cite{iql, cql}.
However, these algorithms generally concentrate only on the global characteristics of the dataset, but ignore the individual feature of each sample.
Typically, they make only a single trade-off for all the data in a mini-batch~\cite{td3+bc, bear, brac} or even the whole offline dataset~\cite{cql, iql, bear, brac}.
Such ``one-size-fits-all'' trade-offs might not be able to achieve a perfect balance for each sample, and thus probably limit the potential of~algorithms.

In this work, we argue that, as illustrated in Figure~\ref{fig:teaser}a, a probably ideal improvement-constraint balance for offline RL is to concentrate more on the policy constraint for samples resembling~expert behaviors, but stress more on the policy improvement for data similar to random behaviors. %\shenzhi{high/low-performing}
Furthermore, we notice that expert demonstrations, even in a small quantity, is proved beneficial to the policy performance by many online RL methods~\cite{expert-demo-1, expert-demo-2,9694460}, but few offline algorithms are able to take full advantage of them.
Based on these two observations, we propose to determine an adaptive trade-off between the policy improvement and policy constraint for each sample with the guidance of only \textit{a few} expert data.
As shown in Figure~\ref{fig:teaser}b, the \textit{offline dataset} contains an enormous amount of data, and the \textit{guiding dataset} consists of a few expert demonstrations. 
We alternate between updating the \textit{guiding network} on the guiding dataset in a MAML-like~\cite{maml,9615119} way and training the RL agent on the offline dataset with the guidance of the guiding network.
Our approach points out a theoretically guaranteed optimization direction for the agent and is easy to implement on most offline RL algorithms.

Our main contribution is a plug-in approach, dubbed \textit{Guided Offline RL} (GORL), which determines the relative importance of policy constraints for every sample in an adaptive and end-to-end way.
A possibly surprising finding is that, with the guidance of only \textit{a few} expert demonstrations, GORL achieves significant performance improvement on a number of state-of-the-art offline RL algorithms~\cite{td3+bc, iql, cql} in various tasks of D4RL~\cite{Fu2020D4RLDF}.
Theoretical analyses also validate the rationality and near-optimality of the guidance provided by GORL.

\begin{figure*}[t]
\centering
\includegraphics[width=0.9\textwidth]{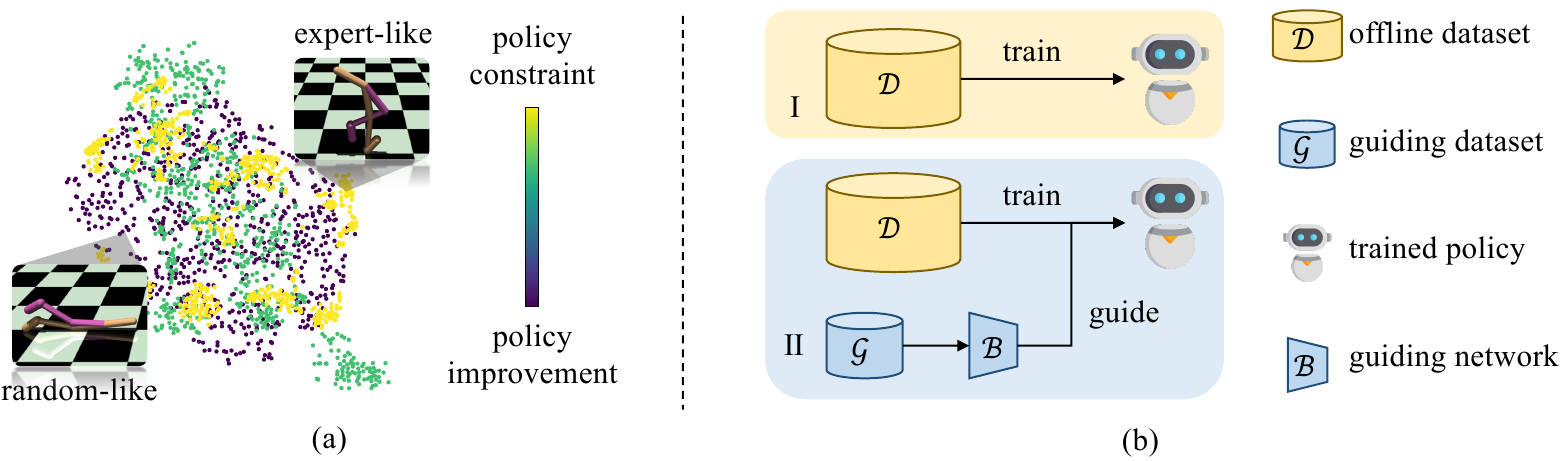}
\vspace{-8pt}
\caption{\textbf{(a)~A possibly ideal improvement-constraint trade-off.} The policy constraint is more encouraged for the data similar to expert behaviors (\textit{i.e.}~``expert-like''). In contrast, the policy improvement is more emphasized for the samples akin to random behaviors (\textit{i.e.}~``random-like'').
The data points are t-SNE~\cite{tsne} visualization of $(s, a, s^\prime, r)$ pairs collected from the Walker2d environment~\cite{mujoco}.
\textbf{(b)~Offline RL (\uppercase\expandafter{\romannumeral1}) and GORL (\uppercase\expandafter{\romannumeral2}).}  Different from the vanilla offline RL, GORL fully utilizes a limited number of expert demonstrations (\textit{i.e.}~the guiding dataset) along with the guiding network and a MAML-like updating method.}
\label{fig:teaser}
\vspace{-10pt}
\end{figure*}

\section{Preliminaries}
\IEEEpubidadjcol

In this section, we introduce some basic concepts and notations used in the following sections.

\textbf{RL formulation}.
RL is usually modeled as a Markov decision process denoted as a tuple $(\mathcal{S}, \mathcal{A}, P, d_0, R, \gamma)$,
where $\mathcal{S}$ is the state space, $\mathcal{A}$ is the action space, 
$P(s_{t+1}\mid s_t, a)$ is the environment's state transition probability, 
$d_0(s_0)$ denotes a distribution of the initial state $s_0$, 
$R(s_t, a, s_{t+1})$ defines a reward function,
and $\gamma\in (0, 1]$ is a discount factor.

\textbf{Offline RL}.
Unlike online RL which learns a policy by interacting with the environment, offline RL aims to optimize the policy by only an offline dataset $\mathcal{D} = \{(s_k, a_k, s^{\prime}_k, r_k) \mid k=1, 2, \cdots, N\}$ without any interaction with the environment.
Although there exist various offline RL algorithms with different training losses, their goals are roughly two-fold, either explicitly~\cite{bear, brac, td3+bc} or implicitly~\cite{iql, cql}:
(1) \textit{policy improvement}, which is aimed to optimize the policy according to current value functions;
(2) \textit{policy constraint}, which keeps the policy around the behavior policy or offline dataset's distribution.

Algorithms have to make a trade-off between these two objectives: if concentrating on the policy improvement term too much, the policy probably steps into an unfamiliar area and generates bad actions due to \textit{distributional shift}~\cite{offline_survey_sergey};
otherwise, focusing excessively on the policy constraint term might lead to the policy only imitating behaviors in the offline dataset $\mathcal{D}$ \cite{offline_survey_sergey,    
offline_survey_prudencio} and possibly lacking generalization ability towards out-of-distribution data~\cite{ood1, bear, brac, wang2021glancing}.

In this paper, we install GORL on several state-of-the-art offline RL algorithms including TD3+BC~\cite{td3+bc} and its variant SAC+BC (applying SAC~\cite{sac} to the TD3+BC framework), CQL~\cite{cql}, and IQL~\cite{iql}. 
Their policy optimization objectives can be unified as follows:

\vspace{-10pt}
\begin{small}
\begin{equation}
\label{eq: unified objective}
    \begin{aligned}
    \pi^* =& \argmax{\pi} \mathbb{E}_{(s, a)\sim \mathcal{D}} \operatorname{F} \Big(\underbrace{L_{pi}\left(Q, \pi, s, a\right)}_{\text{policy improvement term}}, \\ 
    & \underbrace{L_{pc}\left(Q, \pi, s, a\right)}_{\text{policy constraint term}}, \underbrace{d_c}_{\text{constraint degree}}\Big), 
    \end{aligned}
\end{equation}
\end{small}
where $\pi: \mathcal{S}\to \mathcal{A}$ is a policy, $\pi^*: \mathcal{S}\to \mathcal{A}$ is the optimal policy, $Q(s, a): \mathcal{S}\times \mathcal{A}\to \mathbb{R}$ is a state-action value function estimating the expected sum of discounted rewards after taking action $a$ at state $s$.

Furthermore, $L_{pi}(\cdot)$ and $L_{pc}(\cdot)$ stand for a policy improvement term and a policy constraint term, and $F(\cdot)$ is a trade-off function between $L_{pi}(\cdot)$ and $L_{pc}(\cdot)$. 
$d_c \in \mathbb{R}$ is a constraint degree:
larger $d_c$ would encourage stronger policy constraints, and therefore the policy becomes more conservative; 
otherwise, the policy would stress the policy improvement, and thus tends to be more aggressive. 

\section{Method}
In Section~\ref{sec: APC}, we initially elucidate the manner in which GORL employs guiding data for adapting constraint degrees. Subsequently, the theoretical underpinnings of GORL's update rules are examined in Section~\ref{sec: intuitions behind GORL}. Section~\ref{sec: theoretical proof} offers proof concerning the near-optimality of the guidance afforded by GORL. Lastly, practical instantiations of GORL are expounded upon in Section~\ref{sec: Practical implementations of GORL}.

\subsection{The GORL Framework} 
\label{sec: APC}
Consider an offline RL training problem with an offline dataset $\mathcal{D} = \{(s_k, a_k, s^{\prime}_k, r_k) \mid k=1, 2, \cdots, N\}$ and a guiding dataset $\mathcal{G} = \{(s_k, a_k, s_{k}^{\prime}, r_k)\mid k=1, 2, \cdots, M\}$, where $M \ll N$,  and  $\mathbb{E}_{\tau \sim \mathcal{G}}\left[ R(\tau) \right] > \mathbb{E}_{\tau \sim\mathcal{D}}[R(\tau)]$ ($\tau$ is a trajectory and $R(\tau)$ denotes the cumulative reward of the trajectory $\tau$).
For instance, $\mathcal{D}$ is a large offline dataset containing sub-optimal or even random policies' trajectories, while $\mathcal{G}$ is a guiding dataset with a small quantity of data collected by the expert policies. 

We adopt the training objective of the policy $\pi_{\boldsymbol{\theta}}$ similar to that in TD3+BC~\cite{td3+bc} to demonstrate our method:

\begin{small}
\begin{equation}
\label{eq: algos-BC training objective}
    \begin{aligned}
    \pi^* = &\argmax{\pi_{\boldsymbol{\theta}}} \mathbb{E}_{(s_k, a_k) \sim \mathcal{D}}\Big[ \underbrace{Q(s_k, \pi_{\boldsymbol{\theta}}(s_k))}_{\text{policy improvement term}} \\ &- \underbrace{\mathcal{B}_{\boldsymbol{w}}\left(L_{pc}(a_k, \pi_{\boldsymbol{\theta}}(s_k))\right)}_{\text{constraint degree}} \cdot  \underbrace{L_{pc}(a_k, \pi_{\boldsymbol{\theta}}(s_k))}_{\text{policy constraint term}}\Big],
    \end{aligned}
\end{equation}
\end{small}
where $L_{pc}(\cdot)$ stands for a policy constraint term, \textit{e.g.}, $(\pi_{\boldsymbol{\theta}}(s_k) - a_k)^2$ in TD3+BC~\cite{td3+bc}.
The guiding network $\mathcal{B_{\boldsymbol{w}}}: \mathbb{R}\to \mathbb{R}$ with parameters $\boldsymbol{w}$ takes a policy constraint term $L_{pc}$ as input, and outputs a constraint degree.

It's worth noting in Equation~\eqref{eq: algos-BC training objective} that although inputting the policy constraint term $L_{pc}$ to $\mathcal{B}_{\boldsymbol{w}}$ might sacrifice some information of samples, it additionally incorporate the information of targets. Therefore, taking losses as input is a common practice in machine learning and its validity has been proved by many sample weighting methods~\cite{meta-weight-net,Adaboost,Ensemble_of_exemplar-SVMs_for_object_detection_and_beyond,focal_loss}. 

\textbf{Updating the guiding network $\mathcal{B}_{\boldsymbol{w}}$.}
Here we introduce how to update the guiding network $\mathcal{B}_{\boldsymbol{w}}$ to better balance the policy improvement and policy constraint terms.
Similar to~\cite{maml, meta-weight-net}, we start by updating the policy parameters $\boldsymbol{\theta}$ with  a gradient descent step on the offline dataset $\mathcal{D}$:

\vspace{-10pt}
\begin{small}
\begin{equation}
\label{eq: meta update of policy}
\begin{aligned}
     \hat{\boldsymbol{\theta}}^{(t)}(\boldsymbol{w}^{(t)}) =  \boldsymbol{\theta}^{(t)}  -  \alpha_{\mathcal{D}} \frac{1}{n_{\mathcal{D}}} \sum_{k=1}^{n_{\mathcal{D}}}  &\Big[ -\left.\nabla_{\boldsymbol{\theta}} Q(s_k, \pi_{\boldsymbol{\theta}}(s_k)) \right|_{\boldsymbol{\theta}^{(t)}}
      \\ + \mathcal{B}_{\boldsymbol{w}^{(t)}}\left(L_{pc}(a_k, \pi_{\boldsymbol{\theta}^{(t)}}(s_k))\right) 
      & \cdot \left. \nabla_{\boldsymbol{\theta}} L_{pc}(a_k, \pi_{\boldsymbol{\theta}}(s_k))\right|_{\boldsymbol{\theta}^{(t)}} \Big], \\ \qquad (s_k, a_k)\sim \mathcal{D},
\end{aligned}
\end{equation}
\end{small}
where $n_{\mathcal{D}}$ and $\alpha_{\mathcal{D}}$ are the mini-batch size and learning rate respectively on the offline dataset $\mathcal{D}$.
Based on $\hat{\boldsymbol{\theta}}(\boldsymbol{w})$, the guiding network $\mathcal{B}_{\boldsymbol{w}}$ is updated on the guiding dataset $\mathcal{G}$:

\vspace{-10pt}
\begin{small}
\begin{equation}
    \begin{aligned}
    &\boldsymbol{w}^{(t+1)} = 
    \boldsymbol{w}^{(t)} - \alpha_{\mathcal{G}} \frac{1}{n_{\mathcal{G}}} 
    \sum_{k=1}^{n_{\mathcal{G}}}
    \left.\nabla_{\boldsymbol{w}} L_{pc}(a_k, \pi_{\color{blue}\hat{\boldsymbol{\theta}}^{(t)}(\boldsymbol{w}^{(t)})}(s_k))\right|_{\boldsymbol{w}^{(t)}},\\
    &\qquad\qquad\qquad\qquad(s_k, a_k) \sim \mathcal{G},
    \end{aligned}
    \label{eq: update guiding network}
\end{equation}
\end{small}
where $n_{\mathcal{G}}$ is the mini-batch size and $\alpha_{\mathcal{G}}$ is the step size on the guiding dataset $\mathcal{G}$. 
Note that the policy's parameters marked in {\color{blue}blue} is the updated parameters in Equation~\eqref{eq: meta update of policy} related to $\boldsymbol{w}$.
Because updating on $\mathcal{G}$ with purely expert~demonstrations, Equation~\eqref{eq: update guiding network} performs only behavior cloning without the~policy improvement and constraint degree. 
This will encourage $B_{\boldsymbol{w}}$ to output greater constraint degrees for the gradient directions close to expert imitation, as proved in Theorem~\ref{thm: intuition behind GORL}. 

\textbf{Updating the policy $\pi_{\boldsymbol{\theta}}$.}
After performing a gradient descent step on guiding network $\mathcal{B}_{\boldsymbol{w}}$ in Equation~\eqref{eq: update guiding network}, we further move the policy's parameters $\boldsymbol{\theta}$ toward the direction of maximizing the policy objective in Equation~\eqref{eq: algos-BC training objective}:

\vspace{-10pt}
\begin{small}
\begin{equation}
    \begin{aligned}
     \boldsymbol{\theta}^{(t+1)} = \boldsymbol{\theta}^{(t)}  -  \alpha_{\mathcal{D}} \frac{1}{n_{\mathcal{D}}}
    \sum_{k=1}^{n_{\mathcal{D}}} 
     \Big[ 
    \underbrace{-\left.\nabla_{\boldsymbol{\theta}} Q(s_k, \pi_{\boldsymbol{\theta}}(s_k))\right|_{\boldsymbol{\theta}^{(t)}}}_{\text{policy improvement gradient}}
     + \quad\quad\quad \\ \mathcal{B}_{\color{blue}\boldsymbol{w}^{(t+1)}}\left(L_{pc}(a_k, \pi_{\boldsymbol{\theta}^{(t)}}(s_k))\right) \cdot
    \underbrace{\left.\nabla_{\boldsymbol{\theta}} L_{pc}(a_k, \pi_{\boldsymbol{\theta}}(s_k))\right|_{\boldsymbol{\theta}^{(t)}}}_{\text{policy constraint gradient}}
    \Big], \quad \\ (s_k, a_k)\sim \mathcal{D}. \quad\quad\quad\quad\quad\quad\quad\quad\quad\quad\quad\quad\quad\quad\quad\quad\quad\quad\quad
    \end{aligned}
\label{eq: update policy}
\end{equation}
\end{small}

Note that the place marked in {\color{blue} blue} in Equation~\eqref{eq: update policy} is $\boldsymbol{w}^{(t+1)}$ instead of $\boldsymbol{w}^{(t)}$ in Equation~\eqref{eq: meta update of policy}.
It can be clearly observed in Equation~\eqref{eq: update policy} that the guiding network $\mathcal{B}_{\boldsymbol{w}}$ controls the relative update steps of policy improvement and policy constraint gradient for each data pair $(s_k, a_k)$ in the mini-batches.

\subsection{Theoretical Analysis of GORL}  \label{sec: analysis of GORL}

In this section, we first demonstrate the rationality of GORL's update mechanism in Section~\ref{sec: intuitions behind GORL}, and then theoretically prove the near-optimality of GORL's guiding gradient average in Section~\ref{sec: theoretical proof}. 

\subsubsection{Rationality of GORL's Update Mechanism} \label{sec: intuitions behind GORL}

\begin{theorem} \label{thm: intuition behind GORL}
By the chain rule, Equation~\eqref{eq: update guiding network} can be reformulated as:
\begin{small}    
\begin{equation}    
\label{eq: meta update after the chain rule}
    \boldsymbol{w}^{(t+1)} = \boldsymbol{w}^{(t)} + \frac{\alpha_{\mathcal{D}} \alpha_{\mathcal{G}}}{n_{\mathcal{D}}}
    \sum_{{k_1}=1}^{n_{\mathcal{D}}} C_{k_1} \left.\frac{\partial \mathcal{B}_{\boldsymbol{w}}(L^{\text{policy}}_{k_1}(\boldsymbol{\theta}^{(t)}))}{\partial \boldsymbol{w}}\right|_{\boldsymbol{w}^{(t)}},
\end{equation}
\end{small}
where
\begin{small}
\begin{equation} 
\centering
\label{eq: ck}
    \begin{aligned}
    \qquad C_{k_1}=\underbrace{\left( \frac{1}{n_{\mathcal{G}}}\sum_{k_2=1}^{n_{\mathcal{G}}}\left.\frac{\partial L^{\text{guide}}_{k_2}(\boldsymbol{\theta})}{\partial \boldsymbol{\theta}}\right|_{\hat{\boldsymbol{\theta}}^{(t)}(\boldsymbol{w}^{(t)})} \right)^{\top}}_{\text{guiding gradient average}}
     \left.\frac{\partial L^{\text{policy}}_{k_1}(\boldsymbol{\theta})}{\partial \boldsymbol{\theta}}\right|_{\boldsymbol{\theta}^{(t)}}.\quad\quad\quad\quad
    \end{aligned}
\end{equation}
\end{small}
Here, the policy's loss
$L_{k_1}^{\text{policy}}(\boldsymbol{\theta}) =  L_{pc}(a_{k_1}, \pi_{\boldsymbol{\theta}}(s_{k_1}))$ with $(s_{k_1}, a_{k_1})\sim \mathcal{D}$,
and the guiding loss $L_{k_2}^{\text{guide}}(\boldsymbol{\theta})
= L_{pc}(a_{k_2}, \pi_{\boldsymbol{\theta}}(s_{k_2}))$ with $(s_{k_2}, a_{k_2})\sim \mathcal{G}$.
\end{theorem}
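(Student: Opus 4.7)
The plan is to unfold Equation~\eqref{eq: update guiding network} by a single application of the chain rule in $\boldsymbol{w}$, using the explicit form of $\hat{\boldsymbol{\theta}}^{(t)}(\boldsymbol{w}^{(t)})$ supplied by Equation~\eqref{eq: meta update of policy}, and then reorganize the resulting double sum so that the inner sum over $\mathcal{G}$ is recognized as the guiding gradient average. No new ideas are needed beyond careful bookkeeping of which quantities depend on $\boldsymbol{w}$ and which are frozen at $\boldsymbol{\theta}^{(t)}$ or $\hat{\boldsymbol{\theta}}^{(t)}(\boldsymbol{w}^{(t)})$.

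First I would expand the gradient of $L_{k_2}^{\text{guide}}(\hat{\boldsymbol{\theta}}^{(t)}(\boldsymbol{w}))$ with respect to $\boldsymbol{w}$ as the Jacobian product
\[
\nabla_{\boldsymbol{w}} L_{k_2}^{\text{guide}}(\hat{\boldsymbol{\theta}}^{(t)}(\boldsymbol{w}))
= \left(\frac{\partial \hat{\boldsymbol{\theta}}^{(t)}(\boldsymbol{w})}{\partial \boldsymbol{w}}\right)^{\!\top}\!
\left.\frac{\partial L_{k_2}^{\text{guide}}(\boldsymbol{\theta})}{\partial \boldsymbol{\theta}}\right|_{\hat{\boldsymbol{\theta}}^{(t)}(\boldsymbol{w})}.
\]
Next I would differentiate Equation~\eqref{eq: meta update of policy} termwise in $\boldsymbol{w}$. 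The policy improvement gradient $-\nabla_{\boldsymbol{\theta}} Q(s_{k_1},\pi_{\boldsymbol{\theta}}(s_{k_1}))|_{\boldsymbol{\theta}^{(t)}}$ is independent of $\boldsymbol{w}$ and drops out, and within the constraint term the argument $L_{pc}(a_{k_1},\pi_{\boldsymbol{\theta}^{(t)}}(s_{k_1}))$ of $\mathcal{B}_{\boldsymbol{w}}$ is evaluated at the frozen $\boldsymbol{\theta}^{(t)}$, so only $\mathcal{B}_{\boldsymbol{w}}(\cdot)$ itself carries $\boldsymbol{w}$. This yields the clean expression
\[
\frac{\partial \hat{\boldsymbol{\theta}}^{(t)}(\boldsymbol{w})}{\partial \boldsymbol{w}}
= -\frac{\alpha_{\mathcal{D}}}{n_{\mathcal{D}}}\sum_{k_1=1}^{n_{\mathcal{D}}}
\left.\nabla_{\boldsymbol{\theta}} L_{k_1}^{\text{policy}}(\boldsymbol{\theta})\right|_{\boldsymbol{\theta}^{(t)}}
\left(\frac{\partial \mathcal{B}_{\boldsymbol{w}}(L_{k_1}^{\text{policy}}(\boldsymbol{\theta}^{(t)}))}{\partial \boldsymbol{w}}\right)^{\!\top}.
\]

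Substituting this Jacobian back into the chain-rule expression, plugging the result into Equation~\eqref{eq: update guiding network}, and swapping the order of the sums over $k_1$ and $k_2$ converts the overall minus sign into a plus (two negatives combine) and produces
\[
\boldsymbol{w}^{(t+1)} = \boldsymbol{w}^{(t)} + \frac{\alpha_{\mathcal{D}}\alpha_{\mathcal{G}}}{n_{\mathcal{D}}}\sum_{k_1=1}^{n_{\mathcal{D}}}
\left[\left(\tfrac{1}{n_{\mathcal{G}}}\sum_{k_2=1}^{n_{\mathcal{G}}}\nabla_{\boldsymbol{\theta}} L_{k_2}^{\text{guide}}\right)^{\!\top}\!\nabla_{\boldsymbol{\theta}} L_{k_1}^{\text{policy}}\right]
\left.\frac{\partial \mathcal{B}_{\boldsymbol{w}}(L_{k_1}^{\text{policy}}(\boldsymbol{\theta}^{(t)}))}{\partial \boldsymbol{w}}\right|_{\boldsymbol{w}^{(t)}}.
\]
Identifying the bracketed scalar as $C_{k_1}$ in Equation~\eqref{eq: ck} gives exactly Equation~\eqref{eq: meta update after the chain rule}.

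I do not expect a genuine obstacle; this is essentially a careful chain-rule calculation. The only things to watch are (i) keeping the transpose straight so that $C_{k_1}$ comes out as a scalar, (ii) remembering that the argument of $\mathcal{B}_{\boldsymbol{w}}$ in Equation~\eqref{eq: meta update of policy} is a loss evaluated at the fixed iterate $\boldsymbol{\theta}^{(t)}$ rather than at $\boldsymbol{\theta}$, so $\nabla_{\boldsymbol{w}}$ does not propagate through its input, and (iii) tracking the two minus signs (one from Equation~\eqref{eq: meta update of policy}, one from Equation~\eqref{eq: update guiding network}) whose cancellation produces the plus sign in Equation~\eqref{eq: meta update after the chain rule}.
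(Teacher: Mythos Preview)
Your proposal is correct and follows essentially the same route as the paper's proof: apply the chain rule to write $\nabla_{\boldsymbol{w}} L^{\text{guide}}$ as the Jacobian $(\partial\hat{\boldsymbol{\theta}}^{(t)}/\partial\boldsymbol{w})^{\top}$ times $\nabla_{\boldsymbol{\theta}} L^{\text{guide}}$, compute that Jacobian from Equation~\eqref{eq: meta update of policy} by noting only the factor $\mathcal{B}_{\boldsymbol{w}}(\cdot)$ depends on $\boldsymbol{w}$, substitute, and regroup so the scalar inner product $C_{k_1}$ appears. The only cosmetic difference is that the paper averages over $k_2$ up front and works with the aggregate $L^{\text{guide}}$, whereas you keep the sum over $k_2$ explicit until the end; the bookkeeping and care points you flag (transposes, frozen $\boldsymbol{\theta}^{(t)}$, sign cancellation) match the paper exactly.
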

The proof, inspired by \cite{meta-weight-net}, can be found in Appendix~\ref{appendix: proof of the intuition of GORL}.

It can be observed that in Equation~\eqref{eq: meta update after the chain rule}, 
larger $C_{k_1}$ would encourage the guiding network $\mathcal{B}_{\boldsymbol{w}}$ to output a larger constraint degree for the corresponding policy's loss $L_{k_1}^{\text{policy}}(\boldsymbol{\theta}^{(t)})$.
Further note that in Equation~\eqref{eq: ck}, $C_{k_1}$ is an inner product between the guiding gradient average and policy's gradient  $\frac{\partial L_{k_1}^{\text{policy}}(\boldsymbol{\theta})}{\partial \boldsymbol{\theta}}$.  
Therefore, $\mathcal{B}_{\boldsymbol{w}}$ would assign larger weights for those $L_{k_1}^{\text{policy}}(\boldsymbol{\theta})$ whose gradients are close to the guiding gradient average.
This mechanism is consistent with why MAML~\cite{maml} or Meta-Weight-Net~\cite{meta-weight-net} functions well.

The effects are two-fold. 
Firstly, the policy would align its update directions closer to the guiding gradient average~\cite{meta-weight-net}. According to Theorem~\ref{thm: gap between meta and optimal gradient} below, the guiding gradient average converges to the optimal update gradient in probability, so the gradient alignment would lead to  better update directions for the policy;
Secondly, besides the guidance from the guiding dataset, the policy could also enjoy plenty of environmental information provided by a large number of data in the offline dataset $\mathcal{D}$, which is scarce in the guiding dataset $\mathcal{G}$ due to its small data quantity.

\subsubsection{Near-Optimality of the Guidance from GORL} 
\label{sec: theoretical proof}

To demonstrate that the guiding gradient average $\frac{\partial L^{\operatorname{guide}}}{\partial \hat{\boldsymbol{\theta}}} = \frac{1}{n_{\mathcal{G}}}\sum_{k_2=1}^{n_{\mathcal{G}}}\left.\frac{\partial L^{\text{guide}}_{k_2}(\boldsymbol{\theta})}{\partial \boldsymbol{\theta}}\right|_{\hat{\boldsymbol{\theta}}^{(t)}(\boldsymbol{w}^{(t)})}$ in Equation~\eqref{eq: ck} is qualified for guiding the offline training, we denote the guiding gradient obtained on $n$ expert guiding data as  $\frac{\partial L_{1:n}^{\operatorname{guide}}(\hat{\boldsymbol{\theta}})}{\partial \hat{\boldsymbol{\theta}}}$. Formally,

\begin{small}
\begin{equation}
    \frac{\partial L_{1:n}^{\operatorname{guide}}(\hat{\boldsymbol{\theta}})}{\partial \hat{\boldsymbol{\theta}}} \coloneqq \frac{1}{n}\sum_{k=1}^n \frac{\partial L_{k}^{\operatorname{guide}}(\hat{\boldsymbol{\theta}})}{\partial \hat{\boldsymbol{\theta}}}.
\end{equation}
\end{small}

If the number of guiding data tends to infinity, the training process on $\mathcal{G}$ is behavior cloning on infinite expert data. 
Therefore the guiding gradient average will become the optimal update gradient:

\begin{small}
\begin{equation}
    \begin{aligned}
\frac{\partial L^{\operatorname{guide}}_{*}}{\partial \hat{\boldsymbol{\theta}}} \coloneqq \lim_{n\to\infty} \frac{\partial L_{1:n}^{\operatorname{guide}}(\hat{\boldsymbol{\theta}})}{\partial \hat{\boldsymbol{\theta}}}  = \lim_{n\to \infty} \mathbb{E}_{k\sim \operatorname{unif}\{1, n\}}\left[ \frac{\partial L_{k}^{\operatorname{guide}}(\hat{\boldsymbol{\theta}})}{\partial \hat{\boldsymbol{\theta}}} \right].
    \end{aligned}
\end{equation}
\end{small}

Theorem~\ref{thm: gap between meta and optimal gradient} shows that when $n$ increases, the guiding gradient
% on $n$ expert guiding data 
$\frac{\partial L^{\operatorname{guide}}_{1:n}}{\partial \hat{\boldsymbol{\theta}}}$ will converge to the optimal update gradient $\frac{\partial L^{\operatorname{guide}}_*}{\partial \hat{\boldsymbol{\theta}}}$ in probability at a rate  $\ge \frac{1}{n}$.

\begin{theorem}(Near-optimality of the guiding gradient average) \label{thm: gap between meta and optimal gradient}
Here we analyze the near-optimality of the gradient in each layer.
Suppose $\hat{\boldsymbol{\theta}}^{[l]} \in \mathbb{R}^{d_1\times d_2}$ is the trainable parameters of the $l$-th layer, and the elements in $\frac{\partial L_{k}^{\operatorname{guide}}(\hat{\boldsymbol{\theta}})}{\partial \hat{\boldsymbol{\theta}}^{[l]}} \ (k=1, 2, \cdots, n)$ are independent with their variances bounded by $\delta$.
Then the gap  between the $l$-th layer's  guiding gradient average on $n$ expert guiding data $\frac{\partial L^{\operatorname{guide}}_{1:n}}{\partial \hat{\boldsymbol{\theta}}^{[l]}}$ and the  $l$-th layer's optimal update gradient $\frac{\partial L^{\operatorname{guide}}_*}{\partial \hat{\boldsymbol{\theta}}^{[l]}}$, \textit{i.e.}, $\left\| \frac{\partial L^{\operatorname{guide}}_{1:n}}{\partial \hat{\boldsymbol{\theta}}^{[l]}} -\frac{\partial L^{\operatorname{guide}}_*}{\partial \hat{\boldsymbol{\theta}}^{[l]}} \right\|_1$, is $O_p(\frac{1}{n})$. More specifically,

\vspace{-10pt}
\begin{small}
\begin{equation}
    \forall \epsilon > 0, \quad
    P\left(\left\| \frac{\partial L^{\operatorname{guide}}_{1:n}}{\partial \hat{\boldsymbol{\theta}}^{[l]}} -\frac{\partial L^{\operatorname{guide}}_*}{\partial \hat{\boldsymbol{\theta}}^{[l]}} \right\|_1 \ge \epsilon\right) \le \frac{d_1d_2 \delta}{\epsilon^2} \frac{1}{n}.
\end{equation}
\end{small}
Therefore, $\frac{\partial L^{\operatorname{guide}}_{1:n}}{\partial \hat{\boldsymbol{\theta}}^{[l]}}$ converges to $\frac{\partial L^{\operatorname{guide}}_*}{\partial \hat{\boldsymbol{\theta}}^{[l]}}$ in probability at a rate $\ge \frac{1}{n}$, \textit{i.e.},

\vspace{-10pt}
\begin{small}
\begin{equation}
\begin{aligned}
     \forall \epsilon > 0, \quad
    &0 \le \lim_{n\to\infty} P\left(\left\| \frac{\partial L^{\operatorname{guide}}_{1:n}}{\partial \hat{\boldsymbol{\theta}}^{[l]}} -\frac{\partial L^{\operatorname{guide}}_*}{\partial \hat{\boldsymbol{\theta}}^{[l]}} \right\|_1 \ge \epsilon\right)\quad\quad \\ &\quad\quad\quad\quad\quad\quad\quad\quad\quad\quad\quad\quad\quad\le \frac{d_1d_2 \delta}{\epsilon^2} \lim_{n\to \infty}\frac{1}{n}  = 0, 
    \\ &\Longrightarrow
    \frac{\partial L^{\operatorname{guide}}_{1:n}}{\partial \hat{\boldsymbol{\theta}}^{[l]}} \overset{p}{\to} \frac{\partial L^{\operatorname{guide}}_*}{\partial \hat{\boldsymbol{\theta}}^{[l]}} \quad \text{(at a rate $\ge \frac{1}{n}$)}.\quad\quad\quad\quad\quad\quad \nonumber
\end{aligned}
\end{equation}
\end{small}
\end{theorem}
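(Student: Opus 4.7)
The plan is to treat the per-sample guiding gradient as an i.i.d.\ random matrix and reduce the claim to a Chebyshev-type second-moment inequality applied entry-wise. First, I would reinterpret the ``optimal'' gradient $\frac{\partial L^{\operatorname{guide}}_*}{\partial \hat{\boldsymbol{\theta}}^{[l]}}$ as an expectation. Since guiding samples are drawn i.i.d.\ from the expert distribution, the per-sample gradients $G_k^{[l]} \coloneqq \frac{\partial L^{\operatorname{guide}}_k(\hat{\boldsymbol{\theta}})}{\partial \hat{\boldsymbol{\theta}}^{[l]}}$ are i.i.d.\ matrices, and the definition $\frac{\partial L^{\operatorname{guide}}_*}{\partial \hat{\boldsymbol{\theta}}^{[l]}} = \lim_{n\to\infty}\frac{1}{n}\sum_{k=1}^n G_k^{[l]}$ matches the strong law of large numbers limit $\mu^{[l]} \coloneqq \mathbb{E}\!\left[G_k^{[l]}\right]$ (applied component-wise, which requires only that entries have finite variance, which is exactly the hypothesis).

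Next, set $Y_n^{[l]} \coloneqq \frac{1}{n}\sum_{k=1}^n G_k^{[l]} - \mu^{[l]}$, so that each entry $(Y_n^{[l]})_{ij}$ is a centered sample mean of $n$ i.i.d.\ random variables with variance at most $\delta$. Independence of samples across $k$ (together with the per-entry independence stated in the hypothesis) gives $\operatorname{Var}\!\left((Y_n^{[l]})_{ij}\right) \le \delta/n$ and $\mathbb{E}\!\left[(Y_n^{[l]})_{ij}\right]=0$. Summing over the $d_1 d_2$ entries then yields
\begin{equation*}
\mathbb{E}\!\left[\sum_{i,j} (Y_n^{[l]})_{ij}^{\,2}\right] = \sum_{i,j} \operatorname{Var}\!\left((Y_n^{[l]})_{ij}\right) \le \frac{d_1 d_2\,\delta}{n}.
\end{equation*}

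Finally, I would apply Markov's inequality to this squared-norm quantity: $P(\|Y_n^{[l]}\|\ge\epsilon)=P(\|Y_n^{[l]}\|^2\ge\epsilon^2)\le \mathbb{E}[\|Y_n^{[l]}\|^2]/\epsilon^2 \le d_1 d_2\,\delta/(\epsilon^2 n)$, which is exactly the stated bound. Letting $n\to\infty$ drives the right-hand side to zero, so by the squeeze theorem on probabilities the limit equals $0$, establishing convergence in probability with rate at least $1/n$ as claimed.

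The only mildly subtle point — and the place I would be careful — is the norm convention: the argument above produces the advertised constant $d_1 d_2\,\delta$ cleanly for the Frobenius norm (the $\ell_2$ norm of the vectorized matrix). For the entry-wise $\ell_1$ norm as literally written in the statement, the Cauchy--Schwarz inequality $\|Y\|_1^2 \le d_1 d_2\,\|Y\|_F^{\,2}$ would introduce an additional $d_1 d_2$ factor; either the claim is implicitly in Frobenius norm (which is likely the intended reading), or this extra factor should be absorbed into the constant. Either way the rate in $n$ and the qualitative convergence-in-probability conclusion are unaffected, and the underlying Chebyshev calculation is the same.
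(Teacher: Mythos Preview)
Your argument is correct and in spirit close to the paper's---both are second-moment (Chebyshev-type) bounds---but the organization differs. The paper first applies the triangle inequality $\bigl\|\frac{1}{n}\sum_k(G_k-\mu)\bigr\|_1 \le \frac{1}{n}\sum_k\|G_k-\mu\|_1$, then treats the $n d_1 d_2$ per-sample, per-entry deviations as a single family of independent random variables and invokes a lemma derived from Kolmogorov's maximal inequality (though only the non-maximal case $k=n$, i.e.\ essentially Chebyshev, is actually used). You instead average over $k$ first, obtain $\operatorname{Var}\bigl((Y_n)_{ij}\bigr)\le\delta/n$ per entry, and apply Markov to $\|Y_n\|_F^2$. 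Your route is more elementary (no maximal inequality is needed) and in fact requires only independence across samples $k$, not across entries $(i,j)$, so it succeeds under a weaker hypothesis than the paper states. Your observation about norms is also well placed: the paper's argument targets the $\ell_1$ norm directly via the triangle-inequality step, whereas your bound is naturally a Frobenius-norm bound; the constant $d_1 d_2\,\delta$ is exactly what Chebyshev yields for the Frobenius norm, and a literal $\ell_1$ statement would generically pick up an extra dimensional factor, as you note.
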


The proof of Theorem~\ref{thm: gap between meta and optimal gradient} is deferred to Appendix~\ref{appendix: proof of the guiding gradient gap}.

\begin{remark}
Theorem~\ref{thm: gap between meta and optimal gradient}'s supposition that $\frac{\partial L_{k}^{\operatorname{guide}}(\hat{\boldsymbol{\theta}})}{\partial \hat{\boldsymbol{\theta}}^{[l]}} \ (k=1, 2, \cdots, n)$ are independent is valid since, given $i$-th layer parameters $\hat{\boldsymbol{\theta}}^{[l]}$, the transformation from the $k$-th expert guiding sample to its associated gradient is deterministic. Additionally, as expert guiding data is i.i.d., so are $\frac{\partial L_{k}^{\operatorname{guide}}(\hat{\boldsymbol{\theta}})}{\partial \hat{\boldsymbol{\theta}}^{[l]}} \ (k=1, \cdots, n)$. Further note that gradient computation for a trainable parameter is independent of other parameters in the same layer, so $\forall k\in \{1, \cdots, n\}, i\in \{1, \cdots, d_1\}, j \in \{1, \cdots, d_2\}$, $\frac{\partial L_{k}^{\operatorname{guide}}(\hat{\boldsymbol{\theta}})}{\partial \hat{\boldsymbol{\theta}}^{[l]}_{ij}}$ are independent.
\end{remark}

Theorem~\ref{thm: gap between meta and optimal gradient} demonstrates that, when the guiding dataset $\mathcal{G}$ has sufficient expert data, the guiding gradient average in Equation~\eqref{eq: ck} will be approximate to the optimal gradient, and therefore provides reliable guidance for the offline RL algorithms. 
Empirically, as shown by the green bars in Figure~\ref{expertsize}a, a quite small quantity of expert data, \textit{e.g.}, $100$~(the offline dataset's size is typically 1 million), is sufficient for the guiding dataset $\mathcal{G}$ to generate a good enough guiding gradient average in Equation~\eqref{eq: ck}.

\subsection{Practical implementations of GORL.} 
\label{sec: Practical implementations of GORL}

The pseudo-code of our proposed plug-in framework, \textit{i.e.}, GORL, is presented in Algorithm~\ref{alg: algos-BC}.
Furthermore, we provide examples of how to apply GORL to some popular offline RL algorithms, including TD3+BC~\cite{td3+bc} and its variant SAC+BC, IQL~\cite{iql} and CQL~\cite{cql}.
To implement GORL on offline RL algorithms, one of the most important things is to find out the corresponding policy constraint term.
Such constraint term is explicit in some methods (\textit{e.g.}, TD3+BC~\cite{td3+bc} and SAC+BC), while much more implicit in other algorithms  (\textit{e.g.}, CQL~\cite{cql} and IQL~\cite{iql}). 

\begin{algorithm}
\small
\caption{GORL Algorithm} \label{alg: algos-BC}
\begin{algorithmic}[1]
    \Require Offline dataset $\mathcal{D}$, guiding dataset $\mathcal{G}$, 
    batch sizes $n_{\mathcal{D}}, n_{\mathcal{G}}$, 
    learning rates $\alpha_{\mathcal{D}}, \alpha_{\mathcal{G}}$,
    and training steps $N_{\text{train}}$.
    \Ensure Policy $\pi_{\boldsymbol{\theta}}$ after optimization.
    \State Initialize policy $\pi_{\boldsymbol{\theta}}$ and guiding network $\mathcal{B}_{\boldsymbol{w}}$.
    \For{$t = 1 \to N_{\text{train}}$}
        \State Sample a mini-batch $B_{\text{off}}^{(t)} = \{(s_k, a_k, s_{k}^{\prime}, r_k) \mid k=1, 2, \cdots, n_{\mathcal{D}}\}$ uniformly from $\mathcal{D}$.
        \State Update $Q$ with $B_{\text{off}}^{(t)}$. 
            \State Get $\hat{\boldsymbol{\theta}}^{(t)}(\boldsymbol{w}^{(t)})$ with $B_{\text{off}}^{(t)}$ by Equation~\eqref{eq: meta update of policy}.     
            % \Statex \Comment{Preparation for the guiding network's update.}
            %
            \State Sample a mini-batch $B_{\text{guide}}^{(t)} = \{(s_k, a_k, s_{k}^{\prime}, r_k) \mid k=1, 2, \cdots, n_{\mathcal{G}}\}$ uniformly from $\mathcal{G}$.  
            \State Update $\mathcal{B}_{\boldsymbol{w}^{(t)}}$ to   $\mathcal{B}_{\boldsymbol{w}^{(t+1)}}$ with $B_{\text{guide}}^{(t)}$ by Equation~\eqref{eq: update guiding network}.   
            \State Update $\pi_{\boldsymbol{\theta}^{(t)}}$ to $\pi_{\boldsymbol{\theta}^{(t+1)}}$ with $B_{\text{off}}^{(t)}$ by Equation~\eqref{eq: update policy}.  
    \EndFor
\end{algorithmic}
\end{algorithm}

\textbf{Implementation on TD3+BC~\cite{td3+bc}.}
To implement GORL on TD3+BC~\cite{td3+bc}, one can follow the procedures in Algorithm 1 with $L_{pc}(a_k, \pi_{\boldsymbol{\theta}})$ substituted with $(\pi_{\boldsymbol{\theta}}(s_k) - a_k)^2$.

\textbf{Implementation on SAC+BC.}
SAC+BC is a natural extension of TD3+BC~\cite{td3+bc}, replacing TD3~\cite{td3} with SAC~\cite{sac}. Its policy optimization objective is as below:
\begin{equation}
\begin{aligned}
% \vspace{3pt}
    \pi_{\boldsymbol{\theta}} = &\argmax{{\pi_{\boldsymbol{\theta}}}} \mathbb{E}_{k\sim \text{unif}\{1, N\}} \Big[ Q(s_k, \tilde{a}_k) \underbrace{- \alpha \log \pi_{\boldsymbol{\theta}}(\tilde{a}_k\mid s_k)}_{\text{maximizing entropy term}} \\ &- \mathcal{B}_{\boldsymbol{w}}\left((\tilde{a}_k - a_k)^2\right) \cdot (\tilde{a}_k - a_k)^2 \Big],
% \vspace{3pt}
\end{aligned}
\end{equation}
where $(s_k, a_k)\sim \mathcal{D}$, $\alpha\in \mathbb{R}$ is a constant, $\tilde{a}_k$ is an action sampled from $\pi_{\boldsymbol{\theta}}(\cdot\mid s_k)$ by the reparameterization trick~\cite{sac}, and $\pi_{\boldsymbol{\theta}}(\tilde{a}_k \mid s_k)$ denotes the probability of $\pi_{\boldsymbol{\theta}}$ choosing $\tilde{a}_k$ at state $s_k$.
The Q-function optimization of SAC+BC is the same as SAC~\cite{sac}.
By adding the maximizing entropy term to Equation (3), (4), (5), GORL can be applied to SAC+BC with the procedures in Algorithm 1.

\textbf{Implementation on IQL~\cite{iql}.}
The policy update objective in IQL~\cite{iql} is:
\begin{equation}
\begin{aligned}
    \label{eq: IQL original objective}
    \pi_{\boldsymbol{\theta}} = &\argmax{\boldsymbol{\theta}} \mathbb{E}_{(s, a) \sim \mathcal{D}}[\exp (\beta(Q(s, a)-V(s))) \\ &\cdot \log \pi_{\boldsymbol{\theta}}(a\mid s)],
\end{aligned}
\end{equation}

where $V(s)$ is an approximator of $\mathbb{E}_{a}\left[Q(s, a)\right]$.
The intuition behind Equation~\eqref{eq: IQL original objective} is that, if some action $a_k$ is in advantage, \textit{i.e.}, $Q(s, a_k) > \mathbb{E}_a\left[Q(s, a)\right] = V(s_k)$, the term $\exp \left(\beta\left(Q(s, a_k)-V(s)\right)\right)$ will be larger than its expectation $\mathbb{E}_a\left[\exp \left(\beta\left(Q(s, a)-V(s)\right)\right)\right]$.
Therefore, after updating with Equation~\eqref{eq: IQL original objective},  $\pi_{\boldsymbol{\theta}}$ is more likely to choose $a_k$ rather than other actions.
It's obvious that scalar $\beta$, together with $Q(s, a) - V(s)$, controls to what extent $\pi_{\boldsymbol{\theta}}$ accepts action $a$ at state $s$.

Based on the observation above, we reformulate Equation~\eqref{eq: IQL original objective} into:
\begin{equation}
    \label{eq: IQL individual beta}
\begin{aligned}
    \pi_{\boldsymbol{\theta}} = &\argmax{\boldsymbol{\theta}} \mathbb{E}_{k\sim \text{unif}\{1, N\}}[\exp \left(\beta_k\left(Q(s_k, a_k)-V(s_k)\right)\right) 
    \\ &\log \pi_{\boldsymbol{\theta}}(a_k\mid s_k)], \quad (s_k, a_k)\sim \mathcal{D}.
\end{aligned}
\end{equation}
Compared with Equation~\eqref{eq: IQL original objective}, Equation~\eqref{eq: IQL individual beta} assigns a different scalar $\beta_k$ for each data pair $(s_k, a_k)$.
However, note that $\exp \left(\beta_k\left(Q(s_k, a_k)-V(s_k)\right)\right) = \left( \exp \left(Q(s_k, a_k)-V(s_k)\right)\right)^{\beta_k}$.
It's difficult to find an optimal $\beta_k$ end-to-end because $\beta_k$ is the exponent of $\left( \exp \left(Q(s_k, a_k)-V(s_k)\right)\right)$. 

To make the optimization of $\beta_k$ possible, we further change Equation~\eqref{eq: IQL individual beta} into:
\begin{equation}
    \label{eq: IQL new objective}
\begin{aligned}
    \pi_{\boldsymbol{\theta}} = &\argmax{\boldsymbol{\theta}} \mathbb{E}_{k\sim \text{unif}\{1, N\}}[\beta_k \exp (Q(s_k, a_k)-V(s_k)) \\ &\log \pi_{\boldsymbol{\theta}}(a_k\mid s_k)], \quad (s_k, a_k)\sim \mathcal{D},
\end{aligned}
\end{equation}
where $\beta_k$ is multiplied by $\exp \left(Q(s_k, a_k)-V(s_k)\right)$, which is much easier to optimize. 

GORL can be implemented on IQL following Algorithm 1 changed with the new objective (Equation~\eqref{eq: IQL new objective}).
$\mathcal{B}_{\boldsymbol{w}}$ is used to generate $\beta_k$ by taking $\log \pi_{\boldsymbol{\theta}}(\tilde{a}_k\mid s_k)$  as input.

\textbf{Implementation on CQL~\cite{cql}.}
The policy update objective in CQL~\cite{cql} is:
\begin{equation}
\vspace{3pt}
    \label{eq: CQL original objective}
    \pi_{\boldsymbol{\theta}} = \argmax{\boldsymbol{\theta}} \mathbb{E}_{(s, a) \sim \mathcal{D}}\left[Q(s,a) - \log \pi_{\boldsymbol{\theta}}(a\mid s)\right],
\end{equation}
where $Q(s,a)$ is a conservative approximation of the state-action value.
The policy constraint objective is implicitly contained during the conservative Q-learning.
The more conservative Q-value represents the stronger policy constraint.
In this case, GORL can be implemented following Algorithm 1 with the new policy update objective below: 
\begin{equation}
% \vspace{3pt}
\begin{aligned}
    \label{eq: CQL original objective}
    \pi_{\boldsymbol{\theta}} = &\argmax{\boldsymbol{\theta}} \mathbb{E}_{(s, a) \sim \mathcal{D}}[\mathcal{B}_{\boldsymbol{w}}\left(Q(s,a)\right) \cdot Q(s,a) \\ &- \log \pi_{\boldsymbol{\theta}}(a\mid s)].
\end{aligned}
\end{equation}

\section{Experimental Evaluation}
\begin{table*}[t]
  \small
  \caption{Average normalized score over the final 10 evaluations and 5 seeds. 
  GORL achieves great performance improvement on all the 4 state-of-the-art algorithms in the locomotion and adroit tasks.
  }
  \centering
  \begin{tabular}{l|cc|cc|cc|cc||cc}
    \specialrule{0.12em}{0pt}{0pt}
	\multirow{2}{*}{Dataset}
	& \multicolumn{2}{c|}{TD3+BC~\cite{td3+bc}} 
	& \multicolumn{2}{c|}{SAC+BC~\cite{td3+bc,sac}} 
	& \multicolumn{2}{c|}{IQL~\cite{iql}} 
	& \multicolumn{2}{c||}{CQL~\cite{cql}}
	& \multicolumn{2}{c}{\textbf{Avg.}}
	\\ 
	\cline{2-11}
	& Base & Ours
	& Base & Ours
	& Base & Ours
	& Base & Ours
	& Base & Ours
	\\ \hline
    halfcheetah-random-v2 
    & 11.2 & \colorbox{myblue}{16.5} & 12.9 & \colorbox{myblue}{15.6}
    & 10.6 & \colorbox{myblue}{12.2} & \colorbox{myblue}{22.3} & 20.3 
    & 14.3 
    & \textbf{16.2} 
    \\
    hopper-random-v2 
    & 8.8 & \colorbox{myblue}{16.8} & 7.9 & \colorbox{myblue}{28.2}
    & 7.9 & \colorbox{myblue}{8.4} & 8.7 & \colorbox{myblue}{9.8} 
    & 8.3 
    & \textbf{15.8} 
    \\
    walker2d-random-v2 
    & 1.3 & \colorbox{myblue}{2.3} & 1.6 & \colorbox{myblue}{2.6}
    & \colorbox{myblue}{6.6} & 6.3 & 1.7 & \colorbox{myblue}{3.4}  
    & 2.8 
    & \textbf{3.6} 
    \\
    halfcheetah-medium-v2 
    & 48.2 & \colorbox{myblue}{51.8} & 49.9 & \colorbox{myblue}{53.6}
    & 46.8 & \colorbox{myblue}{50.2} & 48.9 & \colorbox{myblue}{51.3}  
    & 48.5 
    & \textbf{51.7} 
    \\
    hopper-medium-v2 
    & 59.8 & \colorbox{myblue}{67.1} & 47.9 & \colorbox{myblue}{49.1}
    & 63.9 & \colorbox{myblue}{68.4} & 70.6 & \colorbox{myblue}{72.2}  
    & 60.6 
    & \textbf{64.2} 
    \\  
    walker2d-medium-v2 
    & 83.9 & \colorbox{myblue}{85.7} & 84.6 & \colorbox{myblue}{86.3}
    & 77.3 & \colorbox{myblue}{78.5} & 83.1 & \colorbox{myblue}{83.5}  
    & 82.2 
    & \textbf{83.5} 
    \\ 
    halfcheetah-medium-replay-v2 
    & 44.6 & \colorbox{myblue}{46.7} & 44.7 & \colorbox{myblue}{46.8}
    & 43.9 & \colorbox{myblue}{45.6} & 37.4 & \colorbox{myblue}{47.0}  
    & 42.6 
    & \textbf{46.5} 
    \\ 
    hopper-medium-replay-v2 
    & 61.7 & \colorbox{myblue}{74.7} & 45.8 & \colorbox{myblue}{55.4}
    & 70.9 & \colorbox{myblue}{74.1} & 93.4 & \colorbox{myblue}{94.6}  
    & 67.9 
    & \textbf{74.7} 
    \\ 
    walker2d-medium-replay-v2 
    & 79.5 & \colorbox{myblue}{84.5} & 78.2 & \colorbox{myblue}{84.4}
    & 67.7 & \colorbox{myblue}{68.0} & \colorbox{myblue}{82.8} & 80.4  
    & 77.0 
    & \textbf{79.3} 
    \\  
    halfcheetah-medium-expert-v2 
    & 91.8 & \colorbox{myblue}{95.6} & 89.4 & \colorbox{myblue}{90.8}
    & 80.9 & \colorbox{myblue}{85.3} & 65.9 & \colorbox{myblue}{81.1}  
    & 82.0 
    & \textbf{88.2} 
    \\  
    hopper-medium-expert-v2 
    & 98.6 & \colorbox{myblue}{105.3} & 93.9 & \colorbox{myblue}{94.7}
    & 39.8 & \colorbox{myblue}{49.8} & 100.1 & \colorbox{myblue}{105.3}  
    & 83.1 
    & \textbf{88.8} 
    \\ 
    walker2d-medium-expert-v2 
    & \colorbox{myblue}{110.2} & 109.5 & 110.1 & \colorbox{myblue}{110.6}
    & 108.4 & \colorbox{myblue}{109.2} & 108.8 & \colorbox{myblue}{109.2}
    & 109.4
    & \textbf{109.6} 
    \\  
    \hline
    \textbf{locomotion-v2 total} 
    & 699.5 & \colorbox{myblue}{756.5} & 666.9 & \colorbox{myblue}{718.1}
    & 624.7 & \colorbox{myblue}{655.9} & 723.9 & \colorbox{myblue}{758.1} 
    & 678.8 
    & \textbf{722.1} 
    \\ 
    \hline
    pen-human-v1 
    & 48.6 & \colorbox{myblue}{71.9} & 35.6 & \colorbox{myblue}{83.1} 
    & 71.5 & \colorbox{myblue}{86.1} & 37.5 & \colorbox{myblue}{51.5}  
    & 48.3 
    & \textbf{73.1} 
    \\ 
    hammer-human-v1 
    & 1.5 & \colorbox{myblue}{1.7} & 1.6 & \colorbox{myblue}{2.2}
    & 1.4 & \colorbox{myblue}{1.6} & 4.4 & \colorbox{myblue}{5.6}  
    & 2.2 
    & \textbf{2.8} 
    \\ 
    door-human-v1 
    & -0.1 & \colorbox{myblue}{0.0} & \colorbox{myblue}{-0.1} & \colorbox{myblue}{-0.1}
    & 4.3 & \colorbox{myblue}{5.7} & 9.9 & \colorbox{myblue}{11.6}  
    & 3.5
    & \textbf{4.3} 
    \\ 
    relocate-human-v1 
    & 0.0 & \colorbox{myblue}{0.1} & \colorbox{myblue}{0.1} & \colorbox{myblue}{0.1}
    & 0.1 & \colorbox{myblue}{0.7} & \colorbox{myblue}{0.2} & \colorbox{myblue}{0.2}  
    & 0.1 
    & \textbf{0.2} 
    \\ 
    pen-cloned-v1 
    & 38.6 & \colorbox{myblue}{46.8} & 23.4 & \colorbox{myblue}{39.7} 
    & 37.3 & \colorbox{myblue}{78.8} & 39.2 & \colorbox{myblue}{64.1}  
    & 34.6 
    & \textbf{57.3} 
    \\ 
    hammer-cloned-v1 
    & \colorbox{myblue}{3.6} & 3.5 & \colorbox{myblue}{1.1} & 0.8
    & \colorbox{myblue}{2.1} & 1.9 & \colorbox{myblue}{2.1} & 1.6  
    & \textbf{2.2} 
    & 1.9
    \\ 
    door-cloned-v1 
    & -0.1 & \colorbox{myblue}{0.0} & -0.1 & \colorbox{myblue}{0.0}
    & 1.6 & \colorbox{myblue}{4.4} & 0.4 & \colorbox{myblue}{1.3}  
    & 0.5 
    & \textbf{1.4} 
    \\ 
    relocate-cloned-v1 
    & -0.2 & \colorbox{myblue} {0.0} & \colorbox{myblue}{-0.2} & \colorbox{myblue}{-0.2}
    & -0.2 & \colorbox{myblue}{0.1} & -0.1 & \colorbox{myblue}{0.0}  
    & -0.2 
    & \textbf{0.0} 
    \\ 
    \hline
    \textbf{adroit-v1 total} 
    & 92.1 & \colorbox{myblue}{123.9} & 61.5 & \colorbox{myblue}{125.5}
    & 118.1 & \colorbox{myblue}{179.2} & 93.6 & \colorbox{myblue}{135.8}  
    & 91.3 
    & \textbf{141.1} 
    \\ 
    \hline \hline
    \textbf{total} 
    & 791.6 & \colorbox{myblue}{880.4} & 728.4 & \colorbox{myblue}{843.6}
    & 742.8 & \colorbox{myblue}{835.1} & 817.5 & \colorbox{myblue}{893.9} 
    & 770.1 
    & \textbf{863.3} 
    \\
    \specialrule{0.12em}{0pt}{0pt}
  \end{tabular}
  \label{main_results}
\end{table*}

In this section, we study performance improvement brought by GORL on offline RL algorithms. 

\textbf{Baselines.}~We consider several state-of-the-art methods as baselines, including TD3+BC~\cite{td3+bc}, SAC+BC (a variant of TD3+BC substituting TD3~\cite{td3} with SAC~\cite{sac}), CQL~\cite{cql}, and IQL~\cite{iql}. 
The former two algorithms adopt explicit policy constraints based on behavior cloning, while the latter two implicitly control the distributional shift by learning conservative Q values and avoiding querying unseen actions respectively. 
We follow the author-provided implementations of the above~methods. 

\textbf{Datasets.}~GORL is evaluated on the Gym locomotion~\cite{mujoco,openai_gym} and Adroit robotic manipulation tasks~\cite{adroit} in the D4RL benchmark~\cite{Fu2020D4RLDF}.
Locomotion includes datasets in various environments with different qualities.
Adroit considers controlling a 24-DoF robotic hand to complete several tasks and collects datasets from various sources.
The standard offline dataset contains approximately 100 thousand or 1 million $(s_k, a_k, s^{\prime}_k, r_k)$ tuples. 
For both locomotion and adroit, we randomly select 200 tuples from their official expert datasets as the guiding data $\mathcal{G}$.
More training details are given in Appendix~\ref{appendixB}.

\textbf{Comparisons.}~Performances of the state-of-the-art offline RL algorithms w/o and w/ GORL are presented in Table~\ref{main_results}.
Generally, our method obtains significant and stable performance improvements on various tasks and various algorithms.
From the Avg. column in Table~\ref{main_results}, GORL surpasses the vanilla algorithms on all the locomotion datasets.
As for adroit tasks, existing methods typically struggle to learn reasonable policies on some challenging datasets.
Although limited by its base methods' performances, GORL still achieves substantially higher scores on the pen-human and pen-cloned datasets.   

\begin{figure}[ht]
\centering
\includegraphics[width=0.9\columnwidth]{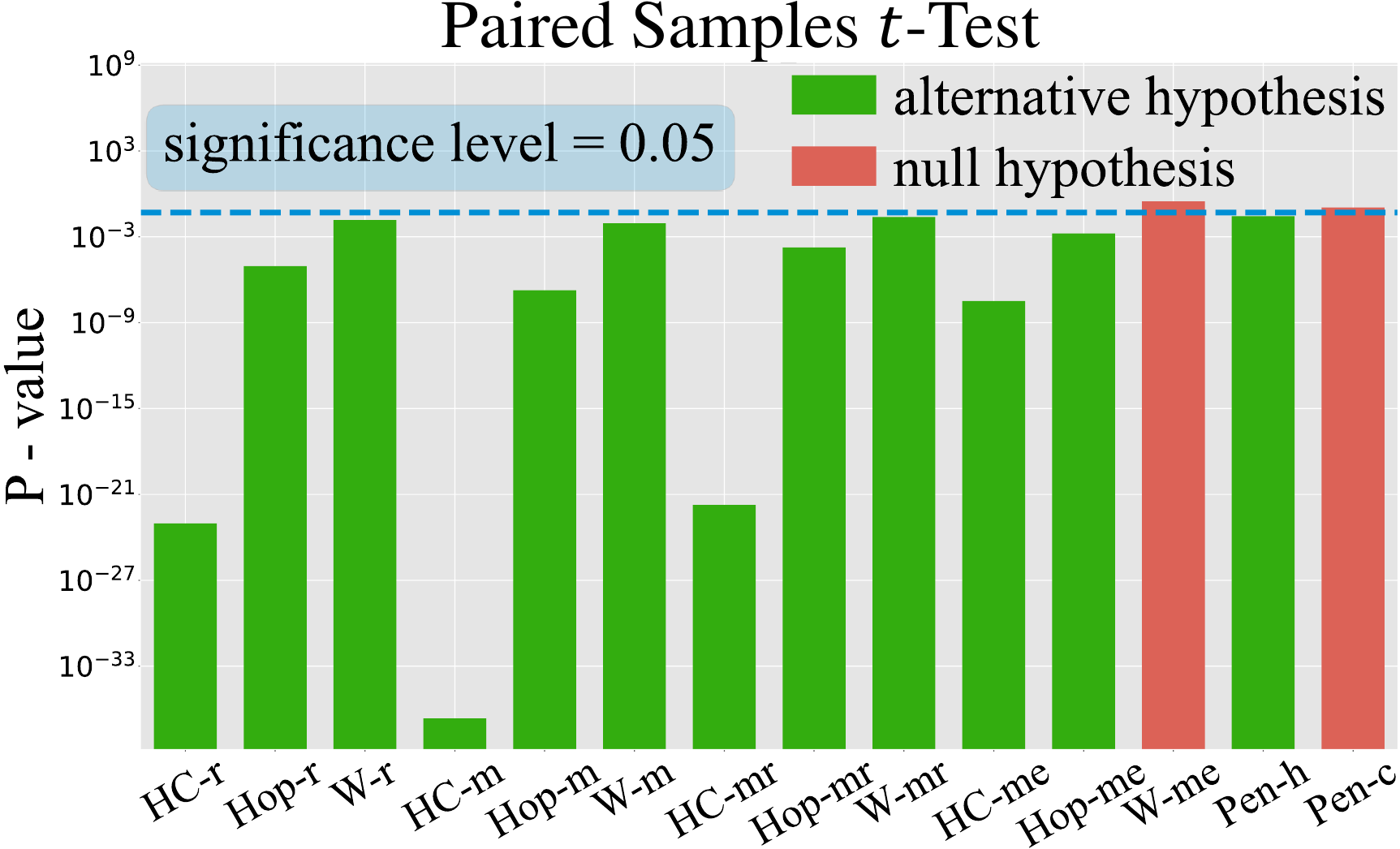}
\vspace{-8pt}
\caption{ Statistical validations.
The alternative hypothesis means GORL significantly outperforms the vanilla algorithms while the null one stands for insignificant~differences. Results show that GORL brings statistically significant performance improvements on most of the tasks under the 0.05 significant level. (HC = Halfcheetah, Hop = Hopper, W = Walker, r = random, m = medium, mr = medium-replay, me = medium-expert, e = expert, h = human, c = cloned.)}
\label{fig:t_test}
\vspace{-10pt}
\end{figure}

\textbf{Paired samples \textit{t}-test.}
Considering that offline-trained policies usually perform with high variance, we additionally conduct the paired-samples $t$-test validation.
The purpose of this experiment is to determine whether there is statistical evidence that the mean difference between the baseline method and the proposed method is significantly different from zero.
50 samples from the last 10 evaluations and 5 seeds for each task are paired in experiments and the significance level is set as $\alpha=0.05$.
If the p-value is above the significance level, the null hypothesis (i.e., the mean difference is not significantly different from zero) is accepted. Otherwise, the alternative hypothesis is accepted.
Figure 2 shows a general acceptance of the alternative hypothesis, which means that our GORL indeed brings a statistically significant improvement to the TD3+BC algorithm.
Furthermore, Table~\ref{t_test_all} provides a more comprehensive validation of all four base algorithms on both locomotion and adroit datasets.
On average, GORL consistently brings significant performance improvement to all of the base algorithms.

\textbf{Policy constraint.} To analyze GORL's effects on the policy constraint intensity, we record the policy constraint loss of TD3+BC~\cite{td3+bc} and TD3+BC with GORL during the whole training process, as shown in Figure~\ref{fig:nomalized policy constraint terms}.
It can be observed that GORL significantly increases the relative policy constraint intensity of the expert data.
The observation reveals that GORL can indeed encourage the algorithms to stress more on the policy constraint for high-quality samples.

\begin{figure}[ht]
    \centering
    \includegraphics[width=1.0\columnwidth]{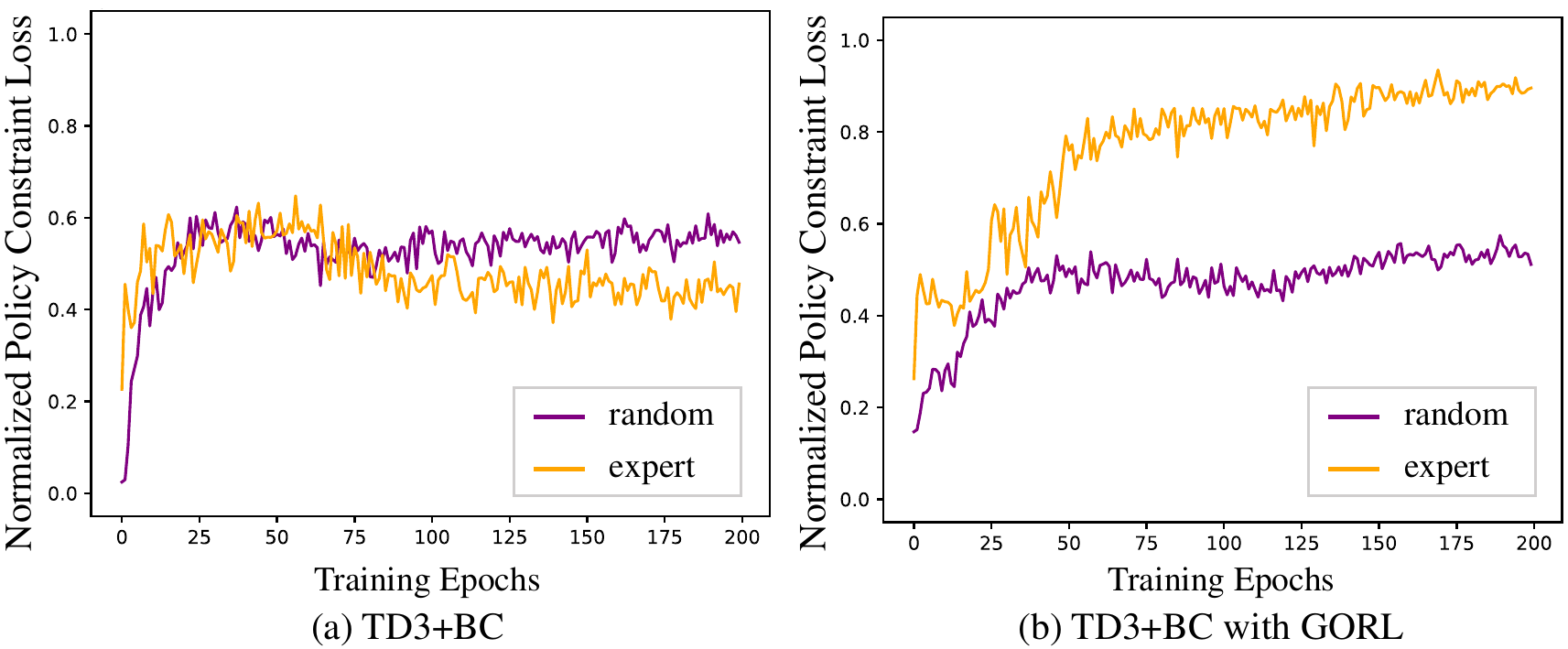}
    \vspace{-18pt}
    \caption{The policy constraint loss of TD3+BC and TD3+BC with GORL during the training process. For a quick comparison, we normalize the curves in each figure to range $[0, 1]$. GORL can greatly increase the relative policy constraint intensity of the expert data.}
    \label{fig:nomalized policy constraint terms}
\end{figure}

\textbf{Runtime.} 
As shown in Table~\ref{run_time}, we measure the training wall-clock time of algorithms w/o and w/ GORL on D4RL locomotion tasks.
On average, GORL only brings $1.9\%$ extra training cost to the base algorithms.

\section{Discussion}
\subsection{Are adaptive weights better than the fixed weight?}
During guided learning, each sample is assigned a different weight and the weights vary through training.
Specifically, when fed with relatively high-quality samples, the agent may be inclined to imitation learning;
otherwise, when encountering lower-quality samples, it may choose to slightly diverge from these samples' distribution.
We claim that such adaptive weights seek to achieve the full potential of every sample, leading to higher performance compared with the fixed weight.
To verify this argument, we compare the fixed-weight method with our adaptive-weights approach on TD3+BC~\cite{td3+bc} algorithm. The possible values of the fixed weight are in the set $\{0.0, 0.1, 0.3, 0.5, 0.7, 0.9, 1.0\}$, and the range of adaptive weights is $[0.0, 1.0]$. 
As shown by the total scores in Table~\ref{fixweight}, our adaptive-weights approach outperform all the fixed-weight methods by a large margin ($\ge 52.2$).
Meanwhile, since the policy's performance varies greatly with different values of the fixed weight, the fixed-weight methods might require much more parameter tuning than our adaptive-weights approach.

\begin{table}[t]
  \caption{Results of paired samples \textit{t}-test, which statistically validate the significance ($\alpha<0.05$) of performance improvement brought by GORL averaged on all datasets.}
  \centering
    \begin{tabular}{p{15mm}<{\centering} p{10.5mm}<{\centering} p{10.5mm}<{\centering} p{10.5mm}<{\centering} p{10.5mm}<{\centering}}
    \hline
	& TD3+BC
	& SAC+BC 
	& CQL
	& IQL
 \\ \hline
    P-Value\ \ ($\alpha$)
    & 9.33E-08 & 8.96E-15 & 3.77E-09 & 0.023349  \\ 
    Significance 
    & TRUE & TRUE & TRUE & TRUE  \\ 
    \hline
  \end{tabular}
  \label{t_test_all}
  \vspace{-8pt}
\end{table}

\begin{table}[t]
  \small
  \caption{Training wall-clock time on GPU 2080Ti.}
  \centering
    \begin{tabular}{p{26mm} p{13mm}<{\centering} p{13mm}<{\centering} p{13mm}<{\centering}}
    \toprule
    Runtime
	&  Base
	&  Ours
 	& Increase
	\\ \midrule
    TD3+BC~\cite{td3+bc}
    & 2h 12m & 2h 17m & 3.7\%  \\ 
    SAC+BC~\cite{td3+bc,sac} 
    & 2h 56m & 2h 59m & 1.7\%  \\ 
    IQL~\cite{iql} 
    & 3h 52m & 3h 58m & 2.6\%  \\ 
    CQL~\cite{cql} 
    & 9h 12m & 9h 19m & 1.3\%  \\ 
    \midrule
    Average 
    & 4h 33m & 4h 38m & 1.9\%  \\
    \bottomrule
  \end{tabular}
  \label{run_time}
\end{table}

\begin{table*}
  \caption{Comparison between fixed weights and adaptive weights (ours) based on TD3+BC~\cite{td3+bc}.}
  \centering
    \begin{tabular}{p{39.5mm} p{9mm}<{\centering} p{9mm}<{\centering} p{9mm}<{\centering} p{9mm}<{\centering} p{9mm}<{\centering} p{9mm}<{\centering} p{9mm}<{\centering} p{11mm}<{\centering}}
    \hline
	\multirow{2}{*}{Dataset}
	& \multicolumn{7}{c}{Fixed Weight} 
    & Ours
	\\ 
	\cmidrule(lr){2-8}\cmidrule(lr){9-9}
	& 0.0
	& 0.1
	& 0.3
	& 0.5
	& 0.7
	& 0.9
	& 1.0
	& 0$\sim$1
	\\ \hline
    halfcheetah-random-v2 
    & \textbf{32.5} 
    & 24.9 & 19.3 & 14.6
    & 13.5 & 12.2 & 11.2
    & 16.5 \\   
    hopper-random-v2 
    & 15.5 
    & \textbf{24.7} & 16.6 & 8.3
    & 8.3 & 8.6 & 8.8
    & 16.8 \\   
    walker2d-random-v2 
    & 0.1 
    & -0.2 & 0.1 & -0.2
    & 0.4 & 1.6 & 1.3
    & \textbf{2.3} \\  
    halfcheetah-medium-v2 
    & 20.2 
    & \textbf{59.8} & 53.7 & 50.7
    & 49.4 & 48.7 & 48.2
    & 51.8 \\    
    hopper-medium-v2 
    & 0.6 
    & 41.0 & \textbf{81.0} & 64.6
    & 61.0 & 59.8 & 59.8
    & 67.1 \\    
    walker2d-medium-v2 
    & 0.9 
    & 1.9 & 54.0 & 85.2
    & 84.8 & 83.5 & 83.9
    & \textbf{85.7} \\   
    halfcheetah-medium-replay-v2 
    & 44.5 
    & \textbf{52.2} & 48.3 & 46.3
    & 45.5 & 44.9 & 44.6
    & 46.7 \\  
    hopper-medium-replay-v2 
    & 33.7 
    & \textbf{86.9} & 85.7 & 64.5
    & 68.2 & 69.9 & 61.7
    & 74.7 \\   
    walker2d-medium-replay-v2 
    & 13.2 
    & 22.4 & \textbf{86.3} & 83.1
    & 79.8 & 77.1 & 79.5
    & 84.5 \\   
    halfcheetah-medium-expert-v2 
    & 6.5 
    & 44.9 & 71.0 & 75.7
    & 82.1 & 89.1 & 91.8
    & \textbf{95.6} \\    
    hopper-medium-expert-v2 
    & 0.9 
    & 3.2 & 80.1 & 100.3
    & 99.6 & 98.6 & 98.6
    & \textbf{105.3} \\   
    walker2d-medium-expert-v2 
    & -0.3 
    & 4.4 & 68.2 & 108.1
    & 108.1 & \textbf{110.2} & \textbf{110.2}
    & 109.5 \\  
    \hline
    total 
    & 168.5 
    & 366.1 & 664.3 & 701.1
    & 700.7 & 704.3 & 699.5
    & \textbf{756.5} \\   
    \hline
  \end{tabular}
  \label{fixweight}
\end{table*}

\begin{figure*}
\centering
    \includegraphics[width=0.98\textwidth]{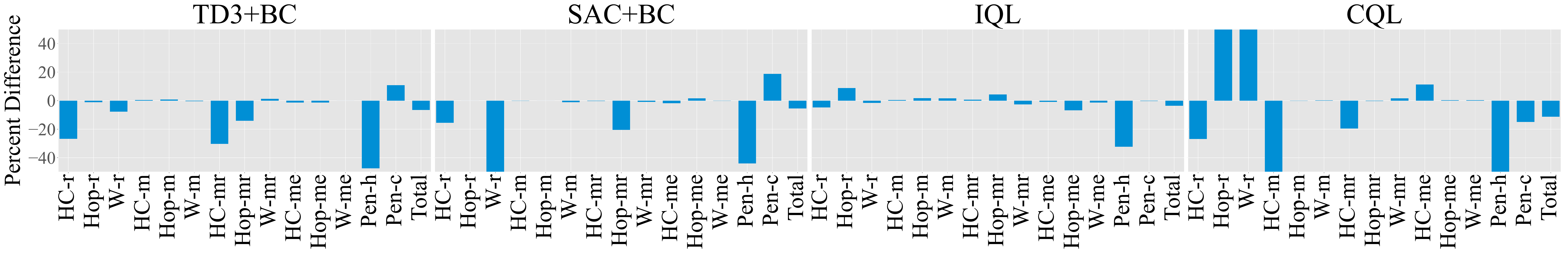}
    \vspace{-8pt}
    \caption{Percent performance difference of vanilla training \textbf{with mixed data} compared to that \textbf{with pure~offline data}. Mixed data means adding 200 expert samples to the offline data without guidance. General performance drops show  simply mixing limited expert samples with offline data is usually harmful.}
    \label{mixbaseline}
\end{figure*}

\begin{table}[ht]
  \small
  \caption{Comparison between \textbf{action selection} and \textbf{guided training}. CQL$^{(G)}$ represents the CQL~\cite{cql} with our GORL.}
  \centering
    \begin{tabular}{p{10.5mm} p{17mm}<{\centering} p{11mm}<{\centering} p{15mm}<{\centering} p{12mm}<{\centering}}
    \hline
	Dataset
	& Filt. BC~\cite{decision_transformer}
	& DT~\cite{decision_transformer}
	& RvS-R~\cite{RVS}
	& CQL$^{(G)}$
	\\ \hline
    HC-r
    & 2.0 & 2.2 & 3.9 & \textbf{20.3} \\ 
    Hop-r 
    & 4.1 & 7.5 & 7.7 & \textbf{9.8} \\ 
    W-r 
    & 1.7 & 2.0 & -0.2 & \textbf{3.4} \\ 
    HC-m 
    & 42.5 & 42.6 & 41.6 & \textbf{51.3} \\ 
    Hop-m 
    & 56.9 & 67.6 & 60.2 & \textbf{72.2} \\ 
    W-m 
    & 75.0 & 74.0 & 71.7 & \textbf{83.5} \\
    HC-mr
    & 40.6 & 36.6 & 38.0 & \textbf{47.0} \\
    Hop-mr
    & 75.9 & 82.7 & 73.5 & \textbf{94.6} \\
    W-mr
    & 62.5 & 66.6 & 60.6 & \textbf{80.4} \\ 
    HC-me
    & \textbf{92.9} & 86.8 & 92.2 & 81.1 \\ 
    Hop-me
    & \textbf{110.9} & 107.6 & 101.7 & 105.3 \\
    W-me
    & 109.0 & 108.1 & 106.0 & \textbf{109.2} \\ 
    \hline
    total 
    & 674.0 & 684.3 & 656.9 & \textbf{758.1} \\
    \hline
  \end{tabular}
  \label{action_selection}
\end{table}

\subsection{Does limited expert data benefit vanilla training?}

Acquiring a large amount of expert data may be very expensive or even impossible in the real world.
Hence, it is of great value to make the best use of limited high-quality data.
Although the proposed guided training achieves significant improvement on existing offline RL algorithms, it remains unclear whether the benefit comes from the training scheme or the extra expert data.
In other words, would the limited expert data also work well in the vanilla training? 
To investigate that, we obtain the mixed offline data by simply adding the limited expert data into the pure offline data and randomly mixing them.
Following the vanilla training ways of baselines, we compare the percent difference of the performance with mixed data to that with pure offline data, as shown in Figure~\ref{mixbaseline}.
Experiments validate that the vanilla training methods cannot benefit from the limited high-quality data.
On the contrary, the distributional gap between the pure offline data and the limited expert data makes the optimization process harder and more unstable, leading to significant performance degradation.

\subsection{How does GORL differ from action selection?}

The proposed GORL learns an adaptive weight for every individual sample, while action selection aims to pick the best samples for offline learning.
On the one hand, both guided training and action selection can be considered as conditional behavior cloning, which adjusts the degree of policy constraint according to specific conditions. 
On the other hand, guided training can be more informative and efficient since it makes better use of the entire dataset, rather than only the high-performing samples.
We compare to the Filtering (``Filt.'') BC~\cite{decision_transformer}, DT~\cite{decision_transformer}, and RvS-R~\cite{RVS} algorithms on the locomotion datatsets, as shown in Table~\ref{action_selection}.
DT adopts transformer architectures to perform behavior cloning on a subset of the data. Filt.~BC performs BC after filtering for the 10\% trajectories with the highest returns. RvS-R uses supervised learning to condition on reward values during reinforcement learning. The baselines' results are borrowed from their original papers~\cite{decision_transformer,RVS}. 
Comparison results indicate that our guided training greatly outperforms the  action selection methods.

\subsection{What if we possess more expert data?}   \label{sec: different num of expert data}

Compared with the vanilla algorithm which simply mixes expert demonstrations with the offline dataset, the guided training better utilizes the limited high-quality data.
However, it is also notable that the offline data which is mixed with abundant expert samples can also result in superior performances under the vanilla algorithm.
For example, as shown in Table~\ref{main_results}, the overall performance of baseline algorithms on the medium-expert dataset is much better than that on the medium dataset. 
Hence, we further conduct experiments on different numbers of expert samples and draw some empirical conclusions on the best way of using expert data.
In Figure~\ref{expertsize}a, the vanilla scheme with expert-mixed data (denoted as ``D(e) $+$ D'') and the guided scheme (denoted as ``D(e) $\xrightarrow{}$ D'') are compared to the baseline scheme with pure offline data.
When the amount of expert data is small, the guided scheme constantly outperforms the mixed scheme.
However, when the amount of expert samples reaches $10^5$, the mixed scheme gains greater benefits.
Furthermore, we train the policy on the expert-only dataset (denoted as ``D(e)'') with different dataset scales, as shown in Figure~\ref{expertsize}b.
It's obvious that the policy's scores remain quite low until the expert sample number reaches $10^4$, which coincides with our suggestion in Section~\ref{sec: intuitions behind GORL} that a large amount of training data is necessary for offline RL.
In conclusion: (1)~the limited expert data itself cannot produce a satisfying agent, due to the insufficiency of training samples; 
(2)~GORL can generate reliable guidance for offline RL with only a few expert samples (\textit{e.g.}, 100), but performance improvement would become insignificant if expert demonstrations increase excessively.

\begin{figure}[t]
\centering
\includegraphics[width=0.4\textwidth]{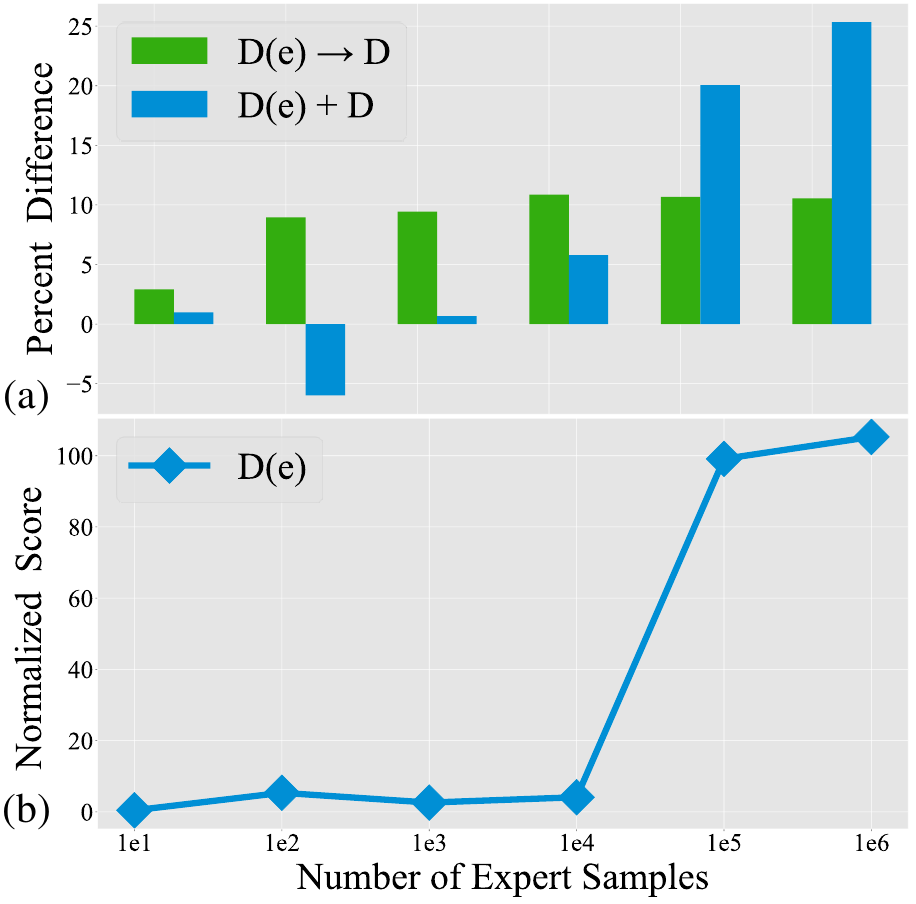}
\caption{Results from various numbers of expert samples. \textbf{D} represents the abundant low-quality data and \textbf{D(e)} refers to the expert samples. \textbf{(a)} \textbf{``D(e) $\xrightarrow{}$ D''} means training a policy on D with the guidance of D(e). \textbf{``D(e) $+$ D''} means training a policy on the vanilla mixture of D and D(e). A larger positive percent difference shows a larger performance improvement, compared to the policy simply trained from D. \textbf{(b)} \textbf{``D(e)''} means training a policy on D(e) without D. A larger score (normalized according to D4RL~\cite{Fu2020D4RLDF}) corresponds to a better policy.}
\label{expertsize}
\end{figure}

\section{Related Work}
\textbf{Offline RL.}~Due to the state-action distribution gap between the training dataset and the test environment, offline RL suffers from the \textit{distributional shift}~\cite{offline_survey_sergey,9904958}~problem.
Much prior work attempts to mitigate this issue through constraining or regularizing the learned policy to be approximated to the behavioral policy. As mentioned in BRAC~\cite{brac}, both explicit and implicit approaches are~beneficial. 

Some methods explicitly implement the constraint by adding a policy constraint term to the policy improvement equation~\cite{td3+bc,PPO+BC,Accelerating_Online_Reinforcement_Learning_with_Offline_Datasets,Overcoming_Exploration_in_Reinforcement_Learning_with_Demonstrations,bear}.
Specifically, inspired by PPO+BC~\cite{PPO+BC}, DDPG+BC~\cite{Overcoming_Exploration_in_Reinforcement_Learning_with_Demonstrations}, and AWAC~\cite{Accelerating_Online_Reinforcement_Learning_with_Offline_Datasets}, TD3+BC~\cite{td3+bc} converts the online RL algorithm (\textit{i.e.} TD3~\cite{td3}) into the offline form via adding a term of behavior cloning loss to the actor's policy improvement loss.
Besides, BEAR~\cite{bear} utilizes a distribution-constrained policy to lessen the accumulation of bootstrapping errors.

Other approaches confine the policy implicitly by revising update rules of value functions~\cite{AlgaeDICE,cql,iql,The-Importance-of-Pessimism-in-Fixed-Dataset-Policy-Optimization}.
In more detail, by including an extra CQL regularization term in the Q-function update equation, Kumar \textit{et al.}~\cite{cql} propose a method that learns a conservative Q-function to estimate a lower bound of the value function.
Instead of evaluating out-of-distribution actions directly, IQL~\cite{iql} only estimates the maximum Q-value over actions within the data distribution via utilizing expectile regression.

There are also methods leveraging imitation learning to assist RL training, called \textit{RvS} (offline RL via supervised learning)~\cite{RVS}, which are commonly imitation learning methods conditioned on goals~\cite{End-to-End_Driving_Via_Conditional_Imitation_Learning,Learning_to_Reach_Goals_via_Iterated_Supervised_Learning,Learning-Latent-Plans-from-Play} or reward values~\cite{Reward-Conditioned_Policies,decision_transformer,Training_Agents_using_Upside-Down_Reinforcement_Learning}. 
Reweighting or filtering are also adopted to advantage high-performing actions~\cite{Rewriting_History_with_Inverse_RL,Fitted_Q-iteration,Reward-Conditioned_Policies,Advantage-Weighted_Regression,Exponentially_Weighted_Imitation_Learning_for_Batched_Historical_Data,BAIL}.

\textbf{Sample weighting method.}~To mitigate the problem of overfitting bias in training data, prior methods attempt to design a weighting function that  maps from training loss to sample weight.
Researchers start by designing such functions manually.
They focus on pre-designing a weighting function generating sample weights with training losses as input.
Generally speaking, methods in this category either force sample weights to monotonically increase (such as AdaBoost~\cite{Adaboost}, hard example mining~\cite{Ensemble_of_exemplar-SVMs_for_object_detection_and_beyond}, and focal loss~\cite{focal_loss}) or decrease (such as SPL~\cite{SPL}, Active bias~\cite{Active_Bias} and iterative reweighting~\cite{A_Framework_for_Robust_Subspace_Learning}). 

Inspired by previous meta-learning approaches~\cite{maml,relation_network,Meta_Networks,LSTM}, some  methods learn adaptive weighting functions via meta-learning. 
FWL~\cite{Fidelity-Weighted_Learning} proposes a semi-supervised algorithm leveraging only a small quantity of high-quality data and  a large set of unlabeled data samples.
Learning to Teach~\cite{Learning_to_Teach, Learning_to_Teach_with_Dynamic_Loss_Functions} adopts a reinforcement learning agent as the teacher model to facilitate the training of the student model.
For a similar purpose,  MentorNet~\cite{MentorNet} leverages a bidirectional LSTM network~\cite{LSTM} to supervise the training of StudentNet. 
With the guidance of a small number of unbiased meta-data, Meta-Weight-Net~\cite{meta-weight-net} also utilizes an MLP with one hidden layer to mitigate the problem of overfitting. 
Different from the methods mentioned above, L2RW~\cite{Learning_to_Reweight_Examples_for_Robust_Deep_Learning} learns weights from gradient directions, without an explicit network.

\section{Conclusion}
\label{sec: conclusion}
We present Guided Offline RL (GORL), a general training framework compatible with most offline RL algorithms, to learn an adaptive intensity of policy constraint under the guidance of only \textit{a few} high-quality data.
Specifically, GORL exerts a weak (or strong) constraint to ``random-like'' (or ``expert-like'')  samples in the offline dataset.
To our knowledge, this is the first offline RL method that takes full advantage of a small number of expert demonstrations. 
Theoretically, we prove that even \textit{quite limited} expert data can provide reliable guidance.
Empirically, we validate our method on several popular offline RL algorithms and abundant tasks in the D4RL benchmark.
Moreover, we discuss the benefits of adaptive policy constraints and guiding expert data through various ablation studies.
Nevertheless, although we investigate the guidance of a few expert data, the feasibility of achieving the adaptive constraint without any guiding data remains to be explored in future work.

\section*{Acknowledgments}
This work is supported in part by the National Science and Technology Major Project of the Ministry of Science and Technology of China under Grants 2019YFC1408703, the National Natural Science Foundation of China under Grants 62022048 and 62276150, the Guoqiang Institute of Tsinghua University.

\appendices
\section{Theorem Proofs}
\label{appendixA}
\subsection{Proof of Theorem 1} \label{appendix: proof of the intuition of GORL}

Denote 
$L^{\text{policy}}_{k_1}(\boldsymbol{\theta}) =  L_{pc}(a_{k_1}, \pi_{\boldsymbol{\theta}}(s_{k_1}))$ with $(s_{k_1}, a_{k_1})\sim \mathcal{D}$,
and 
$L^{\text{guide}}(\boldsymbol{\theta}) = 
\frac{1}{n_{\mathcal{G}}} \sum_{{k_2}=1}^{n_{\mathcal{G}}} L_{k_2}^{\text{guide}}(\boldsymbol{\theta})
= \frac{1}{n_{\mathcal{G}}}\sum_{{k_2}=1}^{n_{\mathcal{G}}}\left[L_{pc}(a_{k_2}, \pi_{\boldsymbol{\theta}}(s_{k_2}))\right]$.
\begin{equation}
\begin{aligned}
    &\left.\frac{\partial L^{\text{guide}}(\hat{\boldsymbol{\theta}}^{(t)}(\boldsymbol{w}))}{\partial \boldsymbol{w}}\right|_{\boldsymbol{w}^{(t)}}\\[4mm]
    & = 
    \left( \left. \frac{\partial \hat{\boldsymbol{\theta}}^{(t)}(\boldsymbol{w})}{\partial \boldsymbol{w}} \right|_{\boldsymbol{w}^{(t)}} \right)^{\top}
    \left.\frac{\partial L^{\text{guide}}(\boldsymbol{\theta})}{\partial \boldsymbol{\theta}}\right|_{\hat{\boldsymbol{\theta}}^{(t)}(\boldsymbol{w}^{(t)})} 
    \\[4mm]
    & = \Bigg( -\frac{\alpha_{\mathcal{D}}}{n_{\mathcal{D}}}\sum_{{k_1}=1}^{n_{\mathcal{D}}}
    \left.\frac{\partial L_{pc}(a_{k_1}, \pi_{\boldsymbol{\theta}}(s_{k_1}))}{\partial \boldsymbol{\theta}}\right|_{\boldsymbol{\theta}^{(t)}}\\[4mm]
    & \qquad \cdot \left(\left.\frac{\partial \mathcal{B}_{\boldsymbol{w}}(L_{pc}(a_{k_1}, \pi_{\boldsymbol{\theta^{(t)}}}(s_{k_1})))}{\partial \boldsymbol{w}}\right|_{\boldsymbol{w}^{(t)}}\right)^{\top}
    \Bigg)^{\top} \\[4mm]
    &\qquad \cdot \left.\frac{\partial L^{\text{guide}}(\boldsymbol{\theta})}{\partial \boldsymbol{\theta}}\right|_{\hat{\boldsymbol{\theta}}^{(t)}(\boldsymbol{w}^{(t)})} \\[4mm]
    & = -\frac{\alpha_{\mathcal{D}}}{n_{\mathcal{D}}}
    \sum_{{k_1}=1}^{n_{\mathcal{D}}}
    \Bigg[\left.\frac{\partial \mathcal{B}_{\boldsymbol{w}}(L_{pc}(a_{k_1}, \pi_{\boldsymbol{\theta^{(t)}}}(s_{k_1})))}{\partial \boldsymbol{w}}\right|_{\boldsymbol{w}^{(t)}} \\[4mm]
    & \qquad \cdot \left(\left.\frac{\partial L_{pc}(a_{k_1}, \pi_{\boldsymbol{\theta}(s_{k_1})})}{\partial \boldsymbol{\theta}}\right|_{\boldsymbol{\theta}^{(t)}}\right)^{\top}
    \left.\frac{\partial L^{\text{guide}}(\boldsymbol{\theta})}{\partial \boldsymbol{\theta}}\right|_{\hat{\boldsymbol{\theta}}^{(t)}(\boldsymbol{w}^{(t)})} \Bigg]\\[4mm]
    & =-\frac{\alpha_{\mathcal{D}}}{n_{\mathcal{D}}}
    \sum_{{k_1}=1}^{n_{\mathcal{D}}}
    \Bigg[\left.\frac{\partial \mathcal{B}_{\boldsymbol{w}}(L_{pc}(a_{k_1}, \pi_{\boldsymbol{\theta^{(t)}}}(s_{k_1})))}{\partial \boldsymbol{w}}\right|_{\boldsymbol{w}^{(t)}} \\[4mm]
    & \qquad \cdot \left(\left.\frac{\partial L_{pc}(a_{k_1}, \pi_{\boldsymbol{\theta}}(s_{k_1}))}{\partial \boldsymbol{\theta}}\right|_{\boldsymbol{\theta}^{(t)}}\right)^{\top} \\[4mm]
    & \qquad \cdot  \frac{1}{n_{\mathcal{G}}}\sum_{k_2=1}^{n_{\mathcal{G}}}\left.\frac{\partial L^{\text{guide}}_{k_2}(\boldsymbol{\theta})}{\partial \boldsymbol{\theta}}\right|_{\hat{\boldsymbol{\theta}}^{(t)}(\boldsymbol{w}^{(t)})} \Bigg]\\[4mm]
    & = -\frac{\alpha_{\mathcal{D}}}{n_{\mathcal{D}}}
    \sum_{{k_1}=1}^{n_{\mathcal{D}}} 
    \Bigg[\left( \frac{1}{n_{\mathcal{G}}}\sum_{k_2=1}^{n_{\mathcal{G}}}\left.\frac{\partial L^{\text{guide}}_{k_2}(\boldsymbol{\theta})}{\partial \boldsymbol{\theta}}\right|_{\hat{\boldsymbol{\theta}}^{(t)}(\boldsymbol{w}^{(t)})} \right)^{\top} \\[4mm]
    & \qquad \cdot \left.\frac{\partial L^{\text{policy}}_{k_1}(\boldsymbol{\theta})}{\partial \boldsymbol{\theta}}\right|_{\boldsymbol{\theta}^{(t)}}
     \left.\frac{\partial \mathcal{B}_{\boldsymbol{w}}(L^{\text{policy}}_{k_1}(\boldsymbol{\theta}^{(t)}))}{\partial \boldsymbol{w}}\right|_{\boldsymbol{w}^{(t)}} \Bigg] \\[4mm]
     & = -\frac{\alpha_{\mathcal{D}}}{n_{\mathcal{D}}}
    \sum_{{k_1}=1}^{n_{\mathcal{D}}} C_{k_1} \left.\frac{\partial \mathcal{B}_{\boldsymbol{w}}(L^{\text{policy}}_{k_1}(\boldsymbol{\theta}^{(t)}))}{\partial \boldsymbol{w}}\right|_{\boldsymbol{w}^{(t)}},
\end{aligned}
\nonumber
\end{equation}
where 
\begin{equation}
    C_{k_1} =  \left( \frac{1}{n_{\mathcal{G}}}\sum_{k_2=1}^{n_{\mathcal{G}}}\left.\frac{\partial L^{\text{guide}}_{k_2}(\boldsymbol{\theta})}{\partial \boldsymbol{\theta}}\right|_{\hat{\boldsymbol{\theta}}^{(t)}(\boldsymbol{w}^{(t)})} \right)^{\top} 
     \left.\frac{\partial L^{\text{policy}}_{k_1}(\boldsymbol{\theta})}{\partial \boldsymbol{\theta}}\right|_{\boldsymbol{\theta}^{(t)}}. \nonumber
\end{equation}
     
Based on the observation above, Equation (4) can be rewritten as:
\begin{equation}
\begin{aligned}
    \boldsymbol{w}^{(t+1)} & = \boldsymbol{w}^{(t)} - \alpha_{\mathcal{G}} \left.\frac{\partial L^{\text{guide}}(\hat{\boldsymbol{\theta}}^{(t)}(\boldsymbol{w}))}{\partial \boldsymbol{w}}\right|_{\boldsymbol{w}^{(t)}} \\[4mm]
    & = \boldsymbol{w}^{(t)} + \frac{\alpha_{\mathcal{D}} \alpha_{\mathcal{G}}}{n_{\mathcal{D}}}
    \sum_{{k_1}=1}^{n_{\mathcal{D}}} C_{k_1} \left.\frac{\partial \mathcal{B}_{\boldsymbol{w}}(L^{\text{policy}}_{k_1}(\boldsymbol{\theta}^{(t)}))}{\partial \boldsymbol{w}}\right|_{\boldsymbol{w}^{(t)}}. \nonumber
\end{aligned}
\end{equation}
Therefore, Theorem 1 is proven.

\subsection{Proof of Theorem 2} \label{appendix: proof of the guiding gradient gap}
We start proving Theorem 2 by Lemma~\ref{lemma: gap between meta and optimal gradient} derived from Kolmogorov's inequality.
\begin{lemma} \label{lemma: gap between meta and optimal gradient}
For independent random variables $\zeta_1, \zeta_2, \cdots, \zeta_n$, supposing that $\mathbb{E}\zeta_1^2, \mathbb{E} \zeta_2^2, \cdots, \mathbb{E} \zeta_n^2$ exist, then the following inequality holds:
\begin{equation}
     \forall \epsilon > 0, \quad P\left(\left|\sum_{i=1}^{n} \zeta_{i}\right| \geq \epsilon\right) \leq \frac{1}{\epsilon^{2}} \sum_{i=1}^{n} \mathbb{E} \zeta_{i}^{2}.
    \label{eq: eq1 of lemma gap between meta and optimal gradient}
\end{equation}
Furthermore, if $\mathbb{E}\zeta_i = 0 \ (i=1, 2, \cdots, n)$, then
\begin{equation}
    \forall \epsilon > 0, \quad P\left(\left|\sum_{i=1}^{n} \zeta_{i}\right| \geq \epsilon\right) \leq \frac{1}{\epsilon^{2}} \sum_{i=1}^{n} \Var (\zeta_{i}).
    \label{eq: eq2 of lemma gap between meta and optimal gradient}
\end{equation}
\end{lemma}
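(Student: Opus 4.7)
The plan is to derive both inequalities from a single application of Markov's inequality to the non-negative random variable $S_n^2 \coloneqq (\sum_{i=1}^{n}\zeta_i)^2$, followed by the $L^2$-orthogonality of independent random variables. First I would rewrite the event of interest as $\{|\sum_i \zeta_i| \ge \epsilon\} = \{S_n^2 \ge \epsilon^2\}$ and apply Markov's inequality to $S_n^2 \ge 0$ at threshold $\epsilon^2 > 0$ to obtain
\begin{equation}
    P\!\left(\left|\sum_{i=1}^n \zeta_i\right| \ge \epsilon\right) \;=\; P(S_n^2 \ge \epsilon^2) \;\le\; \frac{\mathbb{E}[S_n^2]}{\epsilon^2}. \nonumber
\end{equation}
All that is left is to bound $\mathbb{E}[S_n^2]$. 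Existence of $\mathbb{E}\zeta_i^2$ guarantees that $\mathbb{E}\zeta_i$ exists (via $|\zeta_i|\le 1+\zeta_i^2$), so the expansion is legal.

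Next I would expand the square and invoke independence: $\mathbb{E}[S_n^2] = \sum_{i}\mathbb{E}\zeta_i^2 + \sum_{i\neq j}\mathbb{E}[\zeta_i\zeta_j]$, and for $i\neq j$ independence gives $\mathbb{E}[\zeta_i\zeta_j] = \mathbb{E}\zeta_i\,\mathbb{E}\zeta_j$. Under the zero-mean hypothesis $\mathbb{E}\zeta_i = 0$ used in \eqref{eq: eq2 of lemma gap between meta and optimal gradient}, every cross term vanishes, so $\mathbb{E}[S_n^2] = \sum_i \mathbb{E}\zeta_i^2 = \sum_i \Var(\zeta_i)$; combining with the Markov step above proves the second (Chebyshev-type) inequality immediately. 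For the first inequality \eqref{eq: eq1 of lemma gap between meta and optimal gradient}, the same computation yields $\mathbb{E}[S_n^2] = \sum_i \mathbb{E}\zeta_i^2$ once the centered case is adopted, so it is really the same statement rewritten in terms of second moments rather than variances; this is consistent with how Lemma~\ref{lemma: gap between meta and optimal gradient} is actually used in the proof of Theorem~\ref{thm: gap between meta and optimal gradient}, where the $\zeta_i$ are differences between per-sample gradients and their common mean (so the centering is automatic).

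The only subtlety, and the step I would flag most carefully, is the handling of the cross terms in the non-centered regime: without $\mathbb{E}\zeta_i=0$, the raw expansion gives $\mathbb{E}[S_n^2] = \sum_i \mathbb{E}\zeta_i^2 + \sum_{i\neq j}\mathbb{E}\zeta_i\,\mathbb{E}\zeta_j$, and the trailing sum need not be non-positive, so the bound $\mathbb{E}[S_n^2] \le \sum_i \mathbb{E}\zeta_i^2$ is not automatic from independence alone. I would therefore phrase \eqref{eq: eq1 of lemma gap between meta and optimal gradient} as a consequence of first centering the variables (replacing $\zeta_i$ by $\zeta_i - \mathbb{E}\zeta_i$ before applying the Markov step), which matches the lemma's downstream role. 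Apart from this bookkeeping, the proof is essentially one line of Markov plus one line of $L^2$-orthogonality, with no further technical machinery required.
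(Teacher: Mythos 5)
Your proof is correct, but it takes a genuinely different route from the paper's. The paper derives both inequalities from Kolmogorov's maximal inequality, observing that $\left\{\left|\sum_{i=1}^{n}\zeta_i\right|\ge\epsilon\right\}\subseteq\left\{\max_{1\le k\le n}\left|\sum_{i=1}^{k}\zeta_i\right|\ge\epsilon\right\}$ and then discarding the maximum; you instead apply Markov's inequality directly to $\left(\sum_{i}\zeta_i\right)^2$ and kill the cross terms by independence, i.e.\ a plain Chebyshev argument. Since the lemma only ever concerns the final partial sum and not the running maximum, your approach is strictly more elementary and self-contained (no appeal to a maximal inequality whose own proof needs a first-passage decomposition), at the cost of not generalizing to maximal-type statements --- which the paper never needs anyway. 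Your flag about the cross terms in the non-centered regime is a genuine and worthwhile catch: inequality \eqref{eq: eq1 of lemma gap between meta and optimal gradient} as stated is actually false without $\mathbb{E}\zeta_i=0$ (take $\zeta_1=\zeta_2=1$ deterministic and $\epsilon=2$: the left side is $1$ but the bound is $1/2$), and the paper's own invocation of Kolmogorov's inequality silently assumes zero means as well, since that hypothesis is part of the standard statement of that inequality. Your proposed repair --- centering the variables before the Markov step, which is automatic in the lemma's downstream use in Theorem~\ref{thm: gap between meta and optimal gradient} --- is exactly the right fix, and in this respect your write-up is more careful than the paper's.
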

\begin{proof}
By Kolmogorov's inequality, we know that 
\begin{equation}
    \forall \epsilon > 0, \quad P\left(\max _{1 \leq k \leq n}\left|\sum_{i=1}^{k} \zeta_{i}\right| \geq \epsilon\right) \leq \frac{1}{\epsilon^{2}} \sum_{i=1}^{n} \mathbb{E} \zeta_{i}^{2}.  \nonumber 
\end{equation}
Noticing that 
\begin{equation}
 P\left(\left|\sum_{i=1}^{n} \zeta_{i}\right| \geq \epsilon\right) \le P\left(\max _{1 \leq k \leq n}\left|\sum_{i=1}^{k} \zeta_{i}\right| \geq \epsilon\right),   
\end{equation}
Equation~\eqref{eq: eq1 of lemma gap between meta and optimal gradient} is proven.
Moreover, when $\mathbb{E}\zeta_i = 0$, we can infer that $\mathbb{E}\zeta_i^2 = \Var(\zeta_i) + (\mathbb{E}\zeta_i)^2 = \Var(\zeta_i)$. Substituting $\mathbb{E}\zeta_i^2$ in Equation~\eqref{eq: eq1 of lemma gap between meta and optimal gradient} with $\Var(\zeta_i)$, Equation~\eqref{eq: eq2 of lemma gap between meta and optimal gradient} is proven.
\end{proof}
We continue to prove Theorem 2 by noticing that:
\begin{equation}
\begin{aligned}
    \Bigg\| \frac{\partial L_{1:n}^{\operatorname{guide}}(\hat{\boldsymbol{\theta}})}{\partial \hat{\boldsymbol{\theta}}^{[l]}} -& \frac{\partial L^{\operatorname{guide}}_*}{\partial \hat{\boldsymbol{\theta}}^{[l]}} \Bigg\|_1
    = \left\| \frac{1}{n} \sum_{k=1}^n\frac{\partial L_{k}^{\operatorname{guide}}(\hat{\boldsymbol{\theta}})}{\partial \hat{\boldsymbol{\theta}}^{[l]}} - \frac{\partial L^{\operatorname{guide}}_*}{\partial \hat{\boldsymbol{\theta}}^{[l]}} \right\|_1\\[4mm]
    &= \frac{1}{n}  \left\| \sum_{k=1}^n\left( \frac{\partial L_{k}^{\operatorname{guide}}(\hat{\boldsymbol{\theta}})}{\partial \hat{\boldsymbol{\theta}}^{[l]}} -  \frac{\partial L^{\operatorname{guide}}_*}{\partial \hat{\boldsymbol{\theta}}^{[l]}} \right) \right\|_1\\[4mm]
    &\le \frac{1}{n} \sum_{k=1}^n \left\| \frac{\partial L_{k}^{\operatorname{guide}}(\hat{\boldsymbol{\theta}})}{\partial \hat{\boldsymbol{\theta}}^{[l]}} - \frac{\partial L^{\operatorname{guide}}_*}{\partial \hat{\boldsymbol{\theta}}^{[l]}} \right\|_1\\[4mm]
    &=  \frac{1}{n} \sum_{k=1}^n  \sum_{i=1}^{d_1} \sum_{j=1}^{d_2} \left| \frac{\partial L_{k}^{\operatorname{guide}}(\hat{\boldsymbol{\theta}})}{\partial \hat{\boldsymbol{\theta}}_{ij}^{[l]}} - \frac{\partial L^{\operatorname{guide}}_*}{\partial \hat{\boldsymbol{\theta}}_{ij}^{[l]}} \right|. \nonumber 
\end{aligned}
\end{equation}
According to  Theorem 2's assumption, $\frac{\partial L_{k}^{\operatorname{guide}}(\hat{\boldsymbol{\theta}})}{\partial \hat{\boldsymbol{\theta}}_{ij}^{[l]}} \ (k\in \{1, 2, \cdots, n\}, i \in \{1, 2, \cdots, d_1\}, j \in \{1, 2, \cdots, d_2\})$ are independent,
and the variance of each element in $\frac{\partial L_{k}^{\operatorname{guide}}(\hat{\boldsymbol{\theta}})}{\partial \hat{\boldsymbol{\theta}}^{[l]}} - \frac{\partial L^{\operatorname{guide}}_*}{\partial \hat{\boldsymbol{\theta}}^{[l]}}$ is $\delta$-bounded,
\textit{i.e.},
$\forall k\in \{1, 2, \cdots, n\}, \forall i \in \{1, 2, \cdots, d_1\}, \forall j \in \{1, 2, \cdots, d_2\}, \Var\left(\frac{\partial L_{k}^{\operatorname{guide}}(\hat{\boldsymbol{\theta}})}{\partial \hat{\boldsymbol{\theta}}_{ij}^{[l]}} \right)\le \delta$.
Because $\frac{\partial L^{\operatorname{guide}}_*}{\partial \hat{\boldsymbol{\theta}}^{[l]}}$ is a constant vector, $\frac{\partial L_{k}^{\operatorname{guide}}(\hat{\boldsymbol{\theta}})}{\partial \hat{\boldsymbol{\theta}}_{ij}^{[l]}} - \frac{\partial L^{\operatorname{guide}}_*}{\partial \hat{\boldsymbol{\theta}}_{ij}^{[l]}} \ (k\in \{1, 2, \cdots, n\}, i \in \{1, 2, \cdots, d_1\}, j \in \{1, 2, \cdots, d_2\})$ are also independent.
Further notice that 
\begin{equation}
    \mathbb{E}_{k}\left[ \frac{\partial L_{k}^{\operatorname{guide}}(\hat{\boldsymbol{\theta}})}{\partial \hat{\boldsymbol{\theta}}_{ij}^{[l]}} - \frac{\partial L^{\operatorname{guide}}_*}{\partial \hat{\boldsymbol{\theta}}_{ij}^{[l]}} \right] = 0. \nonumber
\end{equation} 
Therefore,
\begin{equation}
\begin{aligned}
    \mathbb{E}_{k}&\left[\left(\frac{\partial L_{k}^{\operatorname{guide}}(\hat{\boldsymbol{\theta}})}{\partial \hat{\boldsymbol{\theta}}_{ij}^{[l]}} - \frac{\partial L^{\operatorname{guide}}_*}{\partial \hat{\boldsymbol{\theta}}_{ij}^{[l]}} \right)^2\right] \\[4mm]
    &= \Var\left( \frac{\partial L_{k}^{\operatorname{guide}}(\hat{\boldsymbol{\theta}})}{\partial \hat{\boldsymbol{\theta}}_{ij}^{[l]}} - \frac{\partial L^{\operatorname{guide}}_*}{\partial \hat{\boldsymbol{\theta}}_{ij}^{[l]}} \right)\\[4mm]
    &\qquad +
    \left(\mathbb{E}_{k}\left[ \frac{\partial L_{k}^{\operatorname{guide}}(\hat{\boldsymbol{\theta}})}{\partial \hat{\boldsymbol{\theta}}_{ij}^{[l]}} - \frac{\partial L^{\operatorname{guide}}_*}{\partial \hat{\boldsymbol{\theta}}_{ij}^{[l]}} \right]\right)^2 \\[4mm]
    &= \Var\left( \frac{\partial L_{k}^{\operatorname{guide}}(\hat{\boldsymbol{\theta}})}{\partial \hat{\boldsymbol{\theta}}_{ij}^{[l]}} - \frac{\partial L^{\operatorname{guide}}_*}{\partial \hat{\boldsymbol{\theta}}_{ij}^{[l]}} \right)\\[4mm]
    &=\Var\left( \frac{\partial L_{k}^{\operatorname{guide}}(\hat{\boldsymbol{\theta}})}{\partial \hat{\boldsymbol{\theta}}_{ij}^{[l]}} \right) \qquad \left(\text{$\frac{\partial L^{\operatorname{guide}}_*}{\partial \hat{\boldsymbol{\theta}}_{ij}^{[l]}}$ are constants}\right)\\[4mm]
    & \le \delta, \nonumber
\end{aligned}
\end{equation}
\textit{i.e.}, $\mathbb{E}_{k}\left[\left(\frac{\partial L_{k}^{\operatorname{guide}}(\hat{\boldsymbol{\theta}})}{\partial \hat{\boldsymbol{\theta}}_{ij}^{[l]}} - \frac{\partial L^{\operatorname{guide}}_*}{\partial \hat{\boldsymbol{\theta}}_{ij}^{[l]}} \right)^2\right] (k\in \{1, 2, \cdots, n\}, i \in \{1, 2, \cdots, d_1\}, j \in \{1, 2, \cdots, d_2\})$ exist.
By Lemma~\ref{lemma: gap between meta and optimal gradient} we know that $\forall \epsilon > 0$,

\begin{equation}
\begin{aligned}
     &P\left(\frac{1}{n} \sum_{k=1}^n  \sum_{i=1}^{d_1} \sum_{j=1}^{d_2} \left| \frac{\partial L_{k}^{\operatorname{guide}}(\hat{\boldsymbol{\theta}})}{\partial \hat{\boldsymbol{\theta}}_{ij}^{[l]}} - \frac{\partial L^{\operatorname{guide}}_*}{\partial \hat{\boldsymbol{\theta}}_{ij}^{[l]}} \right| \ge \epsilon \right)\\[4mm] 
     &\qquad \le 
     \frac{1}{\epsilon^2} \sum_{k=1}^n  \sum_{i=1}^{d_1} \sum_{j=1}^{d_2} 
     \Var\left( \frac{1}{n} \frac{\partial L_{k}^{\operatorname{guide}}(\hat{\boldsymbol{\theta}})}{\partial \hat{\boldsymbol{\theta}}_{ij}^{[l]}} 
     - \frac{1}{n} \frac{\partial L^{\operatorname{guide}}_*}{\partial \hat{\boldsymbol{\theta}}_{ij}^{[l]}} \right).  \nonumber 
\end{aligned}
\end{equation}
By Theorem 2's assumption, the variance of each element in $\frac{\partial L_{k}^{\operatorname{guide}}(\hat{\boldsymbol{\theta}}^{[l]})}{\partial \hat{\boldsymbol{\theta}}^{[l]}} - \frac{\partial L^{\operatorname{guide}}_*}{\partial \hat{\boldsymbol{\theta}}^{[l]}}$ is $\delta$-bounded,
\textit{i.e.},
$\forall k\in \{1, 2, \cdots, n\}, \forall i \in \{1, 2, \cdots, d_1\}, \forall j \in \{1, 2, \cdots, d_2\}, \Var\left(\frac{\partial L_{k}^{\operatorname{guide}}(\hat{\boldsymbol{\theta}})}{\partial \hat{\boldsymbol{\theta}}_{ij}^{[l]}} \right)\le \delta$. Therefore,
\begin{equation}
\begin{aligned}
    &P\left(\left\| \frac{\partial L_{1:n}^{\operatorname{guide}}(\hat{\boldsymbol{\theta}})}{\partial \hat{\boldsymbol{\theta}}^{[l]}} - \frac{\partial L^{\operatorname{guide}}_*}{\partial \hat{\boldsymbol{\theta}}^{[l]}} \right\|_1  \ge \epsilon \right) \\[4mm]
    &\le
    P\left(\frac{1}{n} \sum_{k=1}^n  \sum_{i=1}^{d_1} \sum_{j=1}^{d_2} \left| \frac{\partial L_{k}^{\operatorname{guide}}(\hat{\boldsymbol{\theta}})}{\partial \hat{\boldsymbol{\theta}}_{ij}^{[l]}} - \frac{\partial L^{\operatorname{guide}}_*}{\partial \hat{\boldsymbol{\theta}}_{ij}^{[l]}} \right| \ge \epsilon \right) \\[4mm]
    &\le  \frac{1}{\epsilon^2} \sum_{k=1}^n  \sum_{i=1}^{d_1} \sum_{j=1}^{d_2} 
     \Var\left( \frac{1}{n} \frac{\partial L_{k}^{\operatorname{guide}}(\hat{\boldsymbol{\theta}})}{\partial \hat{\boldsymbol{\theta}}_{ij}^{[l]}} 
     - \frac{1}{n} \frac{\partial L^{\operatorname{guide}}_*}{\partial \hat{\boldsymbol{\theta}}_{ij}^{[l]}} \right)\\[4mm]
    &= \frac{1}{\epsilon^2} \sum_{k=1}^n  \sum_{i=1}^{d_1} \sum_{j=1}^{d_2} 
     \Var\left( \frac{1}{n} \frac{\partial L_{k}^{\operatorname{guide}}(\hat{\boldsymbol{\theta}})}{\partial \hat{\boldsymbol{\theta}}_{ij}^{[l]}} \right) \qquad \\[4mm]
     & \qquad\qquad\qquad \left(\text{because $\frac{\partial L^{\operatorname{guide}}_*}{\partial \hat{\boldsymbol{\theta}}_{ij}^{[l]}}$ are constants}\right) \\[4mm]
    & \le \frac{1}{\epsilon^2}  \sum_{k=1}^n  \sum_{i=1}^{d_1} \sum_{j=1}^{d_2} \frac{1}{n^2}\delta \\[4mm]
    &= \frac{d_1 d_2 \delta}{\epsilon^2} \frac{1}{n}.   \nonumber 
\end{aligned}
\end{equation}
\section{Implementation Details}
\label{appendixB}
\textbf{Software.} We use the following software versions: 
\begin{itemize}
    \item Python 3.9.11
    \item Pytorch  1.11.0+cu113~\cite{pytorch}
    \item Gym  0.23.1~\cite{openai_gym}
    \item MuJoCo 2.1.3\footnote{License information: \url{https://www.roboti.us/license.html}}~\cite{mujoco}
    \item mujoco-py 2.1.2.14
    \item d4rl  1.1~\cite{Fu2020D4RLDF}
\end{itemize}
The Gym locomotion-v2~\cite{mujoco,openai_gym} and robotic manipulation adroit-v1~\cite{adroit} versions in the D4RL benchmark~\cite{Fu2020D4RLDF} datasets are adopted.

\textbf{Hyperparameters.}
We consider several state-of-the-art methods as baselines, including TD3+BC~\cite{td3+bc}, SAC+BC (a variant of TD3+BC substituting TD3~\cite{td3} with SAC~\cite{sac}), CQL~\cite{cql}, and IQL~\cite{iql}. 
Our implementations of TD3+BC\footnote{\url{https://github.com/sfujim/TD3_BC}}~\cite{td3+bc},
CQL\footnote{\url{https://github.com/aviralkumar2907/CQL}}~\cite{cql}, 
and IQL\footnote{\url{https://github.com/rail-berkeley/rlkit/tree/master/examples/iql}}~\cite{iql} are based on respective published papers and author-provided implementations from GitHub. For SAC+BC, we select the optimal hyperparameters by grid search.
For fair comparison, GORL keeps the same hyperparameters as that of the corresponding base algorithms.
More Details of hyperparameters are provided in Table~\ref{table:td3_hyp}, \ref{table:cql_hyp}, \ref{table:sac_hyp}, and \ref{table:iql_hyp}.

\begin{table}[ht]
\caption{Hyperparameters of TD3+BC~\cite{td3+bc} with GORL on locomotion / adroit datasets.}
\centering
\begin{tabular}{cll}
\toprule
& Hyperparameter & Value \\
\midrule
\multirow{9}{*}{TD3 Hyperparameters} & Optimizer & Adam~\cite{adam} \\
                                      & Critic learning rate & 3e-4 \\
                                      & Actor learning rate  & 3e-4 \\
                                      & Mini-batch size      & 256 \\
                                      & Discount factor      & 0.99 \\
                                      & Target update rate   & 5e-3 \\
                                      & Policy noise         & 0.2 \\
                                      & Policy noise clipping & (-0.5, 0.5) \\
                                      & Policy update frequency & 2 \\
\midrule
\multirow{6}{*}{TD3 Architecture}     & Critic hidden dim    & 256        \\
                                      & Critic hidden layers & 2          \\
                                      & Critic activation function & ReLU \\
                                      & Actor hidden dim     & 256        \\
                                      & Actor hidden layers  & 2          \\
                                      & Actor activation function & ReLU \\
\midrule
\multirow{2}{*}{TD3+BC Hyperparameters}  & $\lambda$             & 2.5 / 0.1 \\
                                         & State normalization            & True\\
\midrule
\multirow{4}{*}{GORL Hyperparameters}            
                                      & Guiding-net learning rate    & 1e-5      \\
                                      & Guiding-net update frequency  & 500       \\
                                      & Guiding-data size       & 200       \\
                                      & Guiding-data mini-batch size     & 20       \\
\midrule
\multirow{4}{*}{Guiding-Net Architecture}            
                                      & Hidden dim          & 100    \\
                                      & Hidden layers       & 1      \\
                                      & Activation function & Sigmoid \\
                                      % & Weight normalization & 0 / 20 \\
\bottomrule
\end{tabular}
\vspace{12pt}
\label{table:td3_hyp}
\end{table} 

\begin{table}[ht]
\caption{Hyperparameters of CQL~\cite{cql} with GORL on locomotion / adroit datasets.}
\centering
\begin{tabular}{cll}
\toprule
& Hyperparameter & Value \\
\midrule
\multirow{6}{*}{CQL Hyperparameters}  & Optimizer & Adam~\cite{adam} \\
                                      & Policy learning rate & 1e-4 \\
                                      & Mini-batch size      & 256 \\
                                      & Lagrange thresh      & -1.0 \\
                                      & Min q weight        & 5.0 / 1.0 \\
                                      & Min q version        & 3 / 2 \\
\midrule
\multirow{4}{*}{GORL Hyperparameters}            
                                      & Guiding-net learning rate    & 1e-5      \\
                                      & Guiding-net update frequency  & 200 / 100       \\
                                      & Guiding-data size       & 200       \\
                                      & Guiding-data mini-batch size     & 20       \\
\midrule
\multirow{4}{*}{Guiding-Net Architecture}            
                                      & Hidden dim          & 100    \\
                                      & Hidden layers       & 1      \\
                                      & Activation function & Sigmoid \\
\bottomrule
\end{tabular}
\label{table:cql_hyp}
\end{table} 

\begin{table}[ht]
\caption{Hyperparameters of SAC+BC~\cite{td3+bc} with GORL on locomotion / adroit datasets.}
\centering
\begin{tabular}{cll}
\toprule
& Hyperparameter & Value \\
\midrule
\multirow{8}{*}{SAC Hyperparameters}  & Optimizer & Adam~\cite{adam} \\
                                      & Critic learning rate & 3e-4 \\
                                      & Actor learning rate  & 3e-4 \\
                                      & Mini-batch size      & 256 \\
                                      & Discount factor      & 0.99 \\
                                      & Target update rate   & 5e-3 \\
                                      & Alpha target         & 0.2 \\
                                      & Policy update frequency & 2 \\
\midrule
\multirow{6}{*}{SAC Architecture}     & Critic hidden dim    & 256        \\
                                      & Critic hidden layers & 2          \\
                                      & Critic activation function & ReLU \\
                                      & Actor hidden dim     & 256        \\
                                      & Actor hidden layers  & 2          \\
                                      & Actor activation function & ReLU \\
\midrule
\multirow{2}{*}{SAC+BC Hyperparameters}  & $\lambda$             & 2.5 / 0.1 \\
                                         & State normalization            & True\\
\midrule
\multirow{4}{*}{GORL Hyperparameters}            
                                      & Guiding-net learning rate    & 1e-5      \\
                                      & Guiding-net update frequency  & 500       \\
                                      & Guiding-data size       & 200       \\
                                      & Guiding-data mini-batch size     & 20      \\
\midrule
\multirow{4}{*}{Guiding-Net Architecture}            
                                      & Hidden dim          & 100    \\
                                      & Hidden layers       & 1      \\
                                      & Activation function & Sigmoid \\
\bottomrule
\end{tabular}
\vspace{6pt}
\label{table:sac_hyp}
\end{table} %

\begin{table}[ht]
\caption{Hyperparameters of IQL~\cite{iql} with GORL on locomotion / adroit datasets.}
\centering
\begin{tabular}{cll}
\toprule
& Hyperparameter & Value \\
\midrule
\multirow{6}{*}{IQL Hyperparameters}  & Optimizer & Adam~\cite{adam} \\
                                      & Policy learning rate & 3e-4 \\
                                      & Mini-batch size      & 256 \\
                                      & Dropout rate         & 0.0 / 0.1 \\
                                      & Beta        & 3 / 0.5 \\
                                      & Quantile & 0.7 \\
\midrule
\multirow{4}{*}{GORL Hyperparameters}            
                                      & Guiding-net learning rate    & 1e-5      \\
                                      & Guiding-net update frequency  & 200 / 1       \\
                                      & Guiding-data size       & 200       \\
                                      & Guiding-data mini-batch size     & 20      \\
\midrule
\multirow{4}{*}{Guiding-Net Architecture}            
                                      & Hidden dim          & 100    \\
                                      & Hidden layers       & 1      \\
                                      & Activation function & Sigmoid \\
\bottomrule
\end{tabular}
\vspace{6pt}
\label{table:iql_hyp}
\end{table}

\bibliographystyle{IEEEtran}
\bibliography{ref.bib}

\end{document}